\renewcommand*{\backrefalt}[4]{%
    \ifcase #1 \footnotesize{(Not cited.)}%
    \or        \footnotesize{(Cited on page~#2.)}%
    \else      \footnotesize{(Cited on pages~#2.)}%
    \fi}
\Crefname{lemma}{Lemma}{Lemmas}
\Crefname{remark}{Remark}{Remarks}
\Crefname{figure}{Figure}{Figures}
\Crefname{enumi}{Lemma}{Lemmas}
\newtheorem{theorem}{Theorem}[section]
\newtheorem{lemma}[theorem]{Lemma}
\newtheorem{remark}[theorem]{Remark}
\definecolor{inkgreen}{HTML}{006633}
\newcommand{\rev}[1]{%
   \begingroup
   \color{inkgreen}%
   \def\@citecolor{inkgreen}
   #1%
   \endgroup
 }
\newcommand{\x}{\mathbf x}
\newcommand{\y}{\mathbf y}
\newcommand{\s}{\mathbf s}
\newcommand{\DCal}{\mathcal{D}}
\newcommand{\VCal}{\mathcal{V}}
\newcommand{\br}{\mathbb{R}}
\newcommand{\bn}{\mathbb{N}}
\newcommand{\ba}{\begin{array}}
\newcommand{\ea}{\end{array}}
\newcommand{\EE}{{\mathbb{E}}}
\newcommand{\PP}{\mathbb{P}}
\begin{document}


\begin{center}

{\bf{\LARGE{Stepwise Guided Policy Optimization: \\ [\medskipamount] Coloring your Incorrect Reasoning in GRPO}}}

\vspace*{.2in}
{\large{ \begin{tabular}{c}
Peter Chen$^\dagger$ \and Xiaopeng Li$^\ddagger$ \and Ziniu Li$^\ddagger$ \and Xi Chen$^\square$ \and Tianyi Lin$^\diamond$ \\
\end{tabular}
}}

\vspace*{.2in}

\begin{tabular}{c}
Department of Industrial Engineering and Operations Research$^\diamond$ \\
Department of Mathematics$^\dagger$ \\
Columbia University \\ 
The Chinese University of Hong Kong, Shenzhen$^\ddagger$ \\
Stern School of Business, New York University$^\square$
\end{tabular}

\vspace*{.2in}

\today\\ [.2cm]

\vspace*{.2in}

\begin{abstract}
Reinforcement learning (RL) has proven effective in strengthening the reasoning capabilities of large language models (LLMs). A widely adopted method, Group Relative Policy Optimization (GRPO)~\citep{Shao-2024-Deepseekmath}, has shown strong empirical results in training recent reasoning models~\citep{Guo-2025-Deepseek}, but it fails to update the policy when all responses within a group are incorrect (i.e., all-negative-sample groups). This limitation highlights a gap between artificial and human intelligence: unlike humans, who can learn from mistakes, GRPO discards these failure signals. We introduce a simple framework to mitigate the all-negative-sample issue by incorporating response diversity within groups using a \textit{step-wise} judge model, which can be trained directly or adapted from existing LLMs. In a simplified setting, we prove that this diversification accelerates GRPO’s learning dynamics. We then empirically validate Stepwise Guided Policy Optimization (SGPO) across model sizes (7B, 14B, 32B) in both offline and online training on nine reasoning benchmarks (including base and distilled variants). Overall, SGPO improves average performance and is effective in early and mid-training when all-negative groups are prevalent, while improvements are not uniform across every benchmark and depend on the structure and informativeness of negative samples. Finally, SGPO does not require the judge model to generate correct solutions, distinguishing it from knowledge distillation methods.
\end{abstract}

\end{center}

\begin{figure*}[!ht]
\centering
\includegraphics[width=\textwidth]{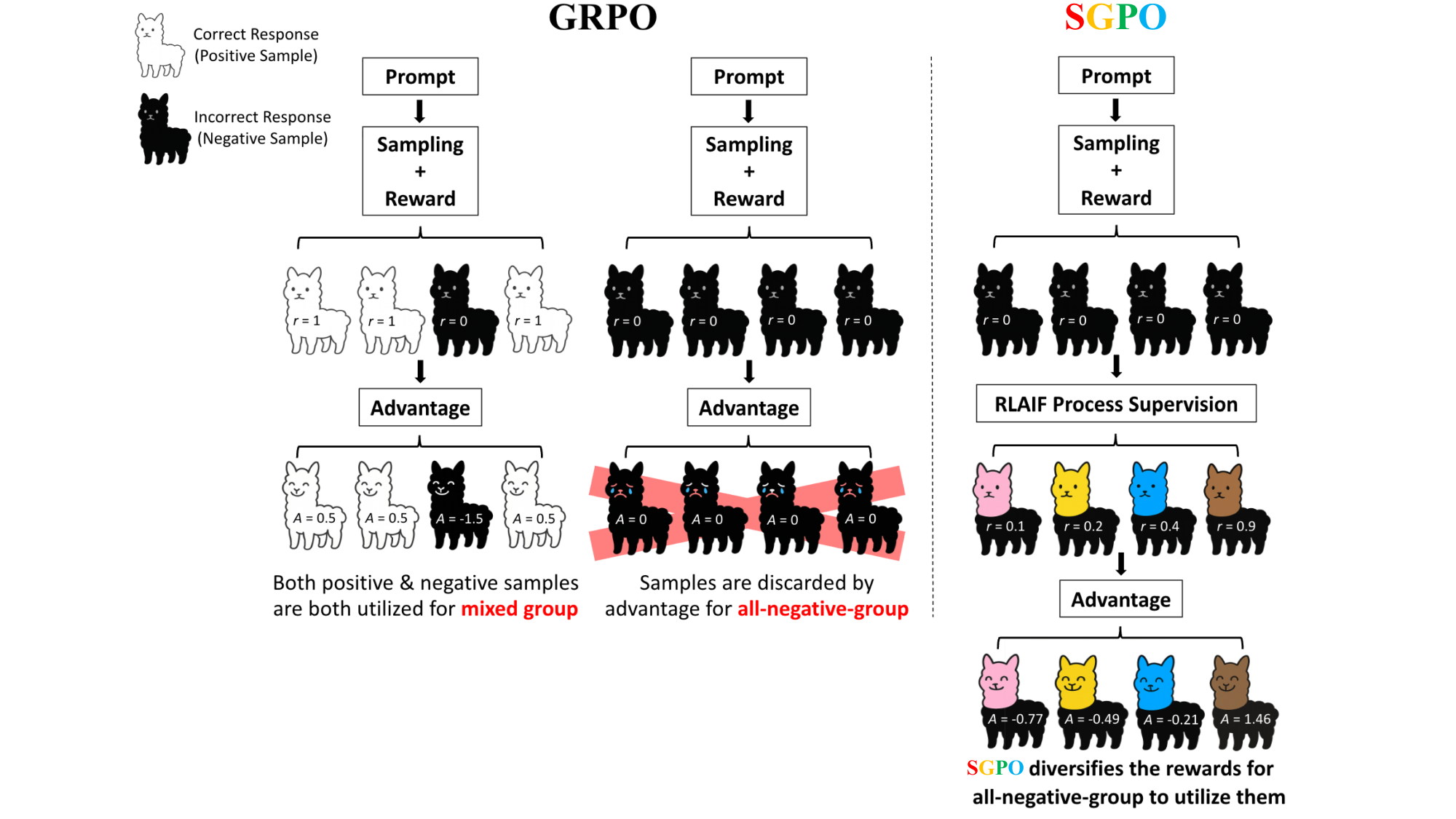} 
\caption{Main pipeline for Stepwise Guided Policy Optimization. On each llama, $r$ indicates the reward of the sampled response and $A$ indicates response's advantage through group relative computation.} \label{f1}
\vspace{-1em}
\end{figure*}

\section{Introduction}
The rise of OpenAI-o1~\citep{Jaech-2024-OpenAI}, DeepSeek-R1~\citep{Guo-2025-Deepseek}, and Kimi-1.5~\citep{Team-2025-Kimi} has highlighted the emergence of \emph{large AI reasoning models}. Unlike instruction-tuned models~\citep{Brown-2020-Language, Chowdhery-2023-Palm, Touvron-2023-Llama, Achiam-2023-GPT}, which produce quick responses by statistically inferring the next token, these new reasoning models deliberately decompose complex prompts (e.g., mathematical problems) into intermediate steps and work through chain-of-thought reasoning~\citep{Wei-2022-Chain, Yao-2023-Tree, Besta-2024-Graph, Xiang-2025-Towards}. This slower yet more rigorous process yields greater accuracy and makes them more human-like, enabling success on more complex and challenging tasks~\citep{Yang-2018-Hotpotqa, Shi-2024-Can, Jain-2025-Livecodebench}. As generative AI applications move beyond single-turn chat and question-answering, these reasoning models are poised to become more powerful and widely adopted, positioning them as a foundational component of modern AI systems. 

At the heart of this revolution lies post-training with outcome-based and verifiable rewards~\citep{Cobbe-2021-Training, Uesato-2022-Solving, Zelikman-2022-STaR, Singh-2023-Progprompt, Hosseini-2024-Vstar, Lightman-2024-Verify, Wang-2024-Math, Setlur-2025-Rewarding, Zhang-2025-Lessons}, together with reinforcement learning (RL) methods~\citep{Schulman-2015-Trust, Schulman-2017-Proximal, Li-2024-Remax, Ahmadian-2024-Back, Shao-2024-Deepseekmath, Xiong-2025-Minimalist}, appreciated for their simplicity, intuitiveness, and practicality. A leading approach is proximal policy optimization (PPO)~\citep{Schulman-2017-Proximal}, which relies on a critic (or value) model to estimate advantages. While essential in general RL tasks, this critic is often unnecessary in large language models (LLMs) due to their deterministic transition dynamics~\citep{Li-2024-Remax}. This observation has inspired alternatives such as group relative policy optimization (GRPO)~\citep{Shao-2024-Deepseekmath} and its extensions~\citep{Yu-2025-DAPO, Liu-2025-Understanding, Chu-2025-GPG, Zhang-2025-GVPO}, which estimate advantages directly in a group-relative fashion (normalizing rewards across multiple samples for the same prompt).

A major limitation of these methods arises when all sampled responses in a group are incorrect (i.e., \emph{all-negative-sample} groups), which eliminates the learning signal and halts policy updates. In GRPO, given a prompt $\x$, responses $\{\y_i\}_{i=1}^G$ are drawn from the old policy $\pi_{\textnormal{old}}$ and assigned rewards $\{r_i\}_{i=1}^G$, where $r_i=1$ if $\y_i$ is correct and $0$ otherwise. Advantages are obtained by normalizing $r_i$ within the group. If $r_i=0$ for all $i$, the advantage vanishes, yielding no update. Such groups are frequent in early and mid-stages of training, when reasoning ability is weak\footnote{To reduce computational cost, training often uses small group sizes and short rollouts, further increasing the likelihood of all-negative-sample groups.}. This shortcoming highlights a gap between artificial and human intelligence: humans effectively learn from mistakes, which act as essential signals during cognitive development~\citep{Chialvo-1999-Learning}. In mathematical reasoning, all-negative-sample groups prompt a child to revise rules and strengthen reasoning ability. 

Recent studies suggest that negative samples in RL-based large reasoning model training carry more nuanced value than previously assumed~\citep{Xiong-2025-Minimalist}. Instead of treating negative samples uniformly, they advocate for principled mechanisms to distinguish negative samples. One prominent direction is process reward models (PRMs)~\citep{Lightman-2024-Verify, Wang-2024-Math, Luo-2024-Improve, Setlur-2025-Rewarding, Zhang-2025-Lessons}, which estimate either the probability of final success or its change after each reasoning step. However, their reliance on speculative value functions makes them prone to reward hacking~\citep{Skalse-2022-Defining}. 

A common observation is that many reasoning tasks possess a structure where step-level correctness can be explicitly defined. This motivates the use of a step-wise judge model that evaluates trajectories by labeling each step as correct (1) or incorrect (0). Such a model can be trained directly~\citep{Xiong-2025-Stepwiser} or adapted from existing LLMs~\citep{Zha-2025-Tango, He-2025-Good}\footnote{We do not have access to their judge models as it's not publicly released, so we adapt our own from existing LLMs. }. By grounding rewards in step-level correctness rather than speculative value estimates, our method mitigates reward hacking and yields clearer signals. Intuitively, this allows negative samples to be differentiated through their trajectories: while early-stage reasoning trajectories are of low-quality, these remain informative -- much like partial credit in education, where intermediate steps still guide learning. 

Our approach enables a holistic evaluation of multi-step reasoning by transforming negative samples from binary outcome rewards into graded, step-level rewards. Consider a negative sample with five reasoning steps $(a_1, a_2, a_3, a_4, a_5)$. If the first error occurs at $a_3$, then $a_1$ and $a_2$ are correct, yielding a correctness proportion of $\tfrac{2}{5}$. To improve reliability, we adopt a Grok4-Heavy -inspired strategy where multiple independent judgments are obtained from the judge model, and the error position is determined by the majority vote. We further introduce two scaling parameters $\beta$ and $\gamma$ to downweight noisy or unreliable signals (see~\cref{eq:sgpo_reward}). Unlike PRMs, our approach avoids memory overhead and does not require costly step-level human annotations, thereby accelerating training. In this work, we focus on outcome-based post-training with group-relative updates (GRPO-style) for structured reasoning tasks; extending our approach to arbitrary reward settings is beyond the scope of this paper and left to future work. 

\paragraph{Contribution.} We propose and analyze a simple and efficient framework that introduces response diversity within all-negative-sample groups. It is both theoretically grounded in the simplified setting and empirically effective on various models, distinguishing our approach from existing heuristics. Our contributions can be summarized as follows:
\begin{enumerate}
\item We propose a \emph{Stepwise Guided Policy Optimization} (SGPO) framework that leverages a step-wise judge model that identifies the first incorrect step that causes the trajectory to deviate from correctness. This makes evaluation computationally tractable and reliable. \textit{It is important to emphasize that our contribution lies not in designing effective judge models, but in introducing a framework that leverages step-wise judges to effectively distinguish negative samples. }We also prove that SGPO outperforms GRPO in a simplified setting. 
\item We conduct experiments demonstrating the effectiveness of our approach in improving LLM reasoning. Evaluations are undertaken across various model sizes (7B, 14B, 32B) in both offline and online settings with nine benchmarks, including base and distilled variants, {using GRPO as the primary baseline given our focus on outcome-based group-relative RLVR.} Our results reveal two key benefits: (i) SGPO delivers improvements beyond the reach of GRPO, especially in the early and mid-stages of training where all-negative-sample groups are common; (ii) SGPO does not rely on more powerful judge models generating correct answers, allowing it to be distinguished from knowledge distillation.
\end{enumerate}
The additional overhead from all-negative-sample groups remains modest, since the correctness can be efficiently verified against reference solutions, enabling rapid assessment of reasoning steps. As the computational and financial costs of closed-source judge models (\texttt{o4-mini}, \texttt{Claude3.7}) rise, SGPO accelerates learning dynamics, making the trade-off worthwhile. SGPO also outperforms GRPO with less powerful and more affordable open-source judge models (\texttt{DeepSeek-V3-0324}, \texttt{Qwen3-235B-A22B}, \texttt{QwQ-32B}), confirming that SGPO remains effective even without cutting-edge LLMs and underscoring its practicality in lower-resource settings. 

\section{Preliminaries and Technical Background}\label{sec:prelim}

\paragraph{LLM finetuning.} LLM finetuning typically consists of \emph{pre-training} and \emph{post-training}. Pre-training equips the model with broad language understanding and generation capabilities, while post-training adapts the model to specific downstream objectives (e.g., improving mathematical problem solving via reasoning). In post-training, a model usually first undergoes \emph{imitation learning} (e.g., supervised finetuning on human or expert trajectories, or direct distillation from stronger models), and then further improves by training on self-generated responses paired with feedback indicating whether the responses are good or bad. Two common feedback-driven paradigms are \emph{reinforcement learning from human feedback} (RLHF), where a learned reward model (trained from human preferences) assigns rewards to model outputs, and \emph{reinforcement learning from verifiable rewards} (RLVR), where rewards are computed by an automatic verifier (often used for math problems with checkable answers). We introduce GRPO, a representative RLVR method, below.

\paragraph{Policy optimization.} Modern LLMs are built based on the Transformer architecture~\citep{Vaswani-2017-Attention} and generate responses $\y = (a_1, \ldots, a_H)$ to user prompts $\x$, where each token $a_h \in \VCal^\star$, with $\VCal$ denoting the vocabulary and $\VCal^\star$ the set of all possible token sequences. We view the LLM as a policy $\pi_\theta(\y | \x)$ parameterized by $\theta$, assigning probabilities to responses $\y$ given $\x$. The policy operates in an auto-regressive way as follows \citep{Agarwal-2020-Optimality, Mei-2021-Leveraging, Li-2024-Remax}:
\begin{equation*}
\pi_\theta(\y | \x) = \prod_{h=1}^H \pi_\theta(a_h \mid \x, a_1, \ldots, a_{h-1}).
\end{equation*}
For a prompt $\x$ with ground-truth response $\y_\x^\star$, performance is evaluated using a regular-expression match on the final answer: $r(\x, \y) = 1$ if $\y$ matches $\y_\x^\star$ and $r(\x, \y) = 0$ otherwise~\citep{Hendrycks-2021-Measuring}. We consider the reasoning tasks defined over a dataset $\DCal = {(\x, \y_\x^\star)}$, where each $\x$ is a problem and $\y_\x^\star$ its ground-truth solution.

The policy gradient methods~\citep{Williams-1992-Simple, Sutton-1998-Introduction} aim to maximize the objective $J(\theta) = \EE_{\x \sim \rho, \y \sim \pi_\theta(\cdot | \x)} [r(\x, \y)]$ where $\rho$ is the prompt distribution and $\pi_\theta$ is an LLM policy. Parameters are updated via $\theta \leftarrow \theta + \eta \nabla_\theta J(\theta)$. In practice, trajectories are sampled from an old policy $\pi_{\theta_\textnormal{old}}$, which is different from $\pi_\theta$, motivating the use of importance sampling as follows: 
\begin{equation*}
J(\theta) = \EE_{\x \sim \rho, \y \sim \pi_{\theta_\textnormal{old}}(\cdot | \x)} \left[\tfrac{\pi_\theta(\y | \x)}{\pi_{\theta_\textnormal{old}}(\y | \x)} r(\x, \y)\right]. 
\end{equation*}
However, this estimator suffers from high variance when $\pi_\theta$ deviates from $\pi_{\theta_\textnormal{old}}$. To stabilize training, clipped surrogate objectives are used as follows:
\begin{equation*}
J(\theta) = \EE_{\x \sim \rho, \y \sim \pi_{\theta_\textnormal{old}}(\cdot | \x)} \left[\min\left\{\tfrac{\pi_\theta(\y | \x)}{\pi_{\theta_\textnormal{old}}(\y | \x)} r(\x, \y), \texttt{clip}\left(\tfrac{\pi_\theta(\y | \x)}{\pi_{\theta_\textnormal{old}}(\y | \x)}, 1-\epsilon, 1+\epsilon\right)r(\x, \y)\right\}\right],
\end{equation*}
where $\texttt{clip}(x, 1-\varepsilon, 1+\varepsilon) := \max\{\min\{x,1+\varepsilon\},1-\varepsilon\}$. The group relative policy optimization (GRPO) and its variants~\citep{Yu-2025-DAPO, Liu-2025-Understanding, Chu-2025-GPG, Zhang-2025-GVPO} adopt this framework but estimate gradients using groups of samples. For each prompt $\x$, GRPO samples responses ${\y_1, \ldots, \y_G}$ from $\pi_{\theta_\textnormal{old}}$. We aim at maximizing the objective function in the form of  
\begin{equation*}
J(\theta) = \EE_{\x \sim \rho,\{\y_i\}_{i=1}^G \sim \pi_{\theta_\textnormal{old}}(\cdot | \x)}\left[\tfrac{1}{G} \sum_{i=1}^G \min\left\{\tfrac{\pi_\theta(\y_i | \x)}{\pi_{\theta_\textnormal{old}}(\y_i | \x)} A_i, \texttt{clip}\left(\tfrac{\pi_\theta(\y_i | \x)}{\pi_{\theta_\textnormal{old}}(\y_i | \x)}, 1-\epsilon, 1+\epsilon\right)A_i\right\}\right], 
\end{equation*}
where $\epsilon \in (0,1)$ and the advantage $A_i$ is computed as
\begin{equation} \label{eq:advantage}
A_i = \tfrac{r(\x, \y_i) - \texttt{mean}(\{r(\x, \y_1), \ldots, r(\x, \y_G)\})}{\texttt{std}(\{r(\x, \y_1), \ldots, r(\x, \y_G) \})}, 
\end{equation}
where $r(\x, \y_i) = 1$ if $\y_i$ matches the ground-truth answer and $0$ otherwise.
\begin{remark}
When rewards are identical across all samples within a group, $A_i = 0$ and no update occurs. This is appropriate for all-positive groups but constitutes a critical limitation for all-negative groups, where GRPO fails to exploit mistakes as learning signals.
\end{remark}

\section{Main Results}
We propose the Stepwise Guided Policy Optimization (SGPO) framework, which employs the step-wise judge model to detect the first incorrect step that leads a trajectory away from correctness. In a simplified setting, we prove that SGPO consistently accelerates GRPO's learning dynamics.

\subsection{A Step-wise Judge Model}\label{subsec:reward}
We propose a principled reward mechanism for negative samples, wherein the step-wise judge model differentiates between structurally sound but partially incorrect reasoning and entirely erroneous responses. This design is motivated by the intuition that an incorrect final answer does not invalidate the entire reasoning process. For instance, a model may follow a logically coherent sequence of steps yet make a minor error -- such as an arithmetic slip -- that leads to an incorrect conclusion. Treating such cases the same as fundamentally flawed or incoherent reasoning does not make sense. This refinement remains effective under constraints such as reduced output length, where a model may be unable to complete the full solution but still demonstrates a valid reasoning trajectory.

Our step-wise judge model can be either trained directly or adapted from existing LLMs. It evaluates responses sequentially, identifying the first substantive error -- such as a computational slip or a logical fallacy -- that causes the trajectory to deviate from correctness. To formalize this, we define the \textit{Reasoning Trajectory Score} (RTS) for an incorrect response $\y$, denoted as $\texttt{RTS}(\y) \in [0,1]$. The judge model checks each step in order, pinpoints the first error, and treats all preceding steps as the valid reasoning segment. $\texttt{RTS}(\y)$ is then computed as the ratio of the valid segment length to the total trajectory length. For example, if $\y$ consists of five steps $(a_1, a_2, a_3, a_4, a_5)$ and the first error occurs at $a_4$, then $\texttt{RTS}(\y) = \tfrac{3}{5}$, indicating that three steps of reasoning are correct before erroneous.

In our experiment, we adapt the judge model from existing LLMs, either closed-source (\texttt{o4-mini}, \texttt{Claude3.7}) or open-source (\texttt{DeepSeek-V3-0324}, \texttt{Qwen3-235B-A22B}, \texttt{QwQ-32B}). To enhance reliability and further reduce variance in the reward signal, we employ the following protocol: (i) alongside the candidate response, we provide the judge with a reference solution (e.g., a gold final answer, a brief solution outline, or a full reasoning trace when available); in our experiments, we draw this solution from a supervised fine-tuning dataset with correct answers and reasoning trajectories, which anchors the intended solution path and enables error localization; and (ii) we elicit step-wise evaluation rather than holistic evaluation. The judge model justifies correctness or flags an error sentence by sentence, identifies the first clear mistake, and then traces how this error propagates to the final incorrect conclusion.

Based on the reasoning trajectory score, we introduce a new outcome reward function:
\begin{equation}\label{eq:sgpo_reward}
r_{\texttt{SGPO}}(\y) = \left\{\begin{array}{cl}
1, & \textnormal{if the final answer of } \y \textnormal{ is correct},  \\
\frac{1}{1+\exp(-\beta(\texttt{RTS}(\y)-\gamma))},  & \textnormal{otherwise}. 
\end{array}\right.    
\end{equation}
where $\gamma > 0$ and $\beta > 0$ (taken as 10 and 0.5 in the actual implementation) are two parameters to decide scale threshold and scale intensity, respectively. This design ensures that the model receives a more informative gradient signal during training, thereby encouraging refinement of partially correct reasoning rather than indiscriminate penalization of all incorrect outputs. This specification of $r_{\texttt{SGPO}}$ can be directly incorporated into the advantage calculation in \cref{eq:advantage}. As a consequence, SGPO keeps the same rollout pipeline as GRPO and the same outcome-based supervision, and only replaces the reward used in within-group advantage computation from $r(x,y)$ to $r_{\texttt{SGPO}}(y)$ in \cref{eq:sgpo_reward}.
\begin{remark}
Our approach differs from process reward models (PRMs)~\citep[e.g.][]{Lightman-2024-Verify}. For a prompt $\x$ and a prefix of reasoning steps $(a_1,\ldots,a_t)$, a PRM typically predicts either (i) a prefix-level value $V(\x, a_{1:t})=\PP(\textnormal{final answer correct}\mid \x, a_{1:t})$, or (ii) a step-level progress signal such as $\Delta_t=V(\x, a_{1:t})-V(\x, a_{1:t-1})$. In practice, PRMs are trained by supervised ranking of intermediate steps and are used to re-rank trajectories or shape training at the \emph{prefix} level, acting as approximate value (or $Q$-value) functions over prefixes. In contrast, SGPO introduces a different way of producing and using feedback signals: (i) Policy-guided rollouts without search. All trajectories are sampled from the current policy, without PRM-guided exploration or trajectory alteration; (ii) Post-hoc first-error identification. A step-wise judge inspects the entire trajectory, pinpoints the earliest error relative to a reference solution, and converts this into a calibrated scalar reward $r_{\texttt{SGPO}}(\y)$ via the reasoning trajectory score; (iii) Stable credit assignment in all-negative-sample groups. By locating the first definitive mistake only after observing the full trace, SGPO eliminates the look-ahead ambiguity and feedback loops inherent to PRM-guided search~\citep{Zhang-2024-Rest}, while avoiding the need for the judge to solve the problem or approximate a value function. We note that SGPO alleviates but does not fully eliminate degeneracy: if all trajectories in a group receive identical $r_{\texttt{SGPO}}$ (e.g., they fail at the same first-error position), the group-normalized advantages can still vanish, though this phenomenon is rarely observed in our experiments.
\end{remark}
\begin{remark}
Our approach differs from knowledge distillation~\citep[e.g.][]{Kang-2023-Knowledge, Gu-2024-Minillm}. The student model trained via knowledge distillation inherits the judge the model's failure, since it only imitates the judge model's outputs. In contrast, SGPO leverages the judge model to identify mistakes in the student's reasoning, providing learning signals that go beyond imitation and enabling improvements unattainable by knowledge distillation.
\end{remark}

\subsection{Accelerating Learning Dynamics}\label{subsec:analysis}
We present a theoretical analysis to explain why SGPO outperforms GRPO. We study an idealized setting in which the step-wise judge provides accurate first-error localization. Even in this regime, establishing a general separation is technically subtle, so we focus on a stylized example. To this end, we consider the case when $H=2$, where each step admits two possible actions $a_h \in {1,2}$ for $h=1,2$. Extending the analysis to general horizons or action spaces and imperfect (noisy or biased) judges is left to future work. This configuration follows prior works \citep{Dayan-1991-Reinforcement, Li-2024-Remax}, in which analogous examples were employed to validate theoretical insights. Without loss of generality, we assume a unique ground-truth response $\y_\x^\star = (2,2)$ for the prompt $\x$. For clarity, we restrict the sample space to ${(1,1), (2,1), (2,2)}$, excluding $(1,2)$ since a correct reasoning step is unlikely to, and should not, follow an incorrect precursor. 

To illustrate the effect of SGPO, we analyze the learning dynamics of SGPO and GRPO in this simplified setting. Under GRPO, the rewards are assigned as $r((2,2)) = 1$ and $r((2,1)) = r((1,1)) = 0$, meaning that only selecting the ``good'' action 2 at both steps yields a positive reward. In contrast, SGPO assigns $r_{\texttt{SGPO}}((2,2)) = 1$, $r_{\texttt{SGPO}}((2,1)) = \tfrac{1}{2}$ and $r_{\texttt{SGPO}}((1,1)) = 0$. The difference is that partial progress -- choosing the ``good'' action 2 in the first step but failing at the second -- receives no credit in GRPO yet proportional credit in SGPO. Here, $\tfrac{1}{2}$ is chosen for illustrative purposes to convey the qualitative behavior of the reward mechanism, while the exact values used in experiments are determined by~\cref{eq:sgpo_reward}. 

The algorithm iteratively updates the policy parameter $\theta$ using samples drawn from the current policy $\pi_\theta$. We rewrite the generic GRPO update with a step size $\eta>0$ as follows,
\begin{equation*}
\theta^{(k+1)} = \theta^{(k)} + \eta\cdot g(\theta), \quad \text{where } g(\theta) = \frac{1}{NGH} \left(\sum_{i=1}^N \sum_{k=1}^G\sum_{h=1}^Hs_\theta(\x^i,a^{i,k}_{1:h-1})A_{i,k}\right),
\end{equation*}
where $N$ is the number of prompts, $G$ is the number of groups, $H$ is the number of reasoning steps in each response, $s_\theta(\x^i,a^{i,k}_{1:h-1}):=\nabla\theta \log\pi_\theta(a_t|\x,a_{1:h-1})$ is the score function, and the advantage $A_{i,k}$ is defined by \cref{eq:advantage} for each question $\x^i$. To distinguish, we denote $g_{\texttt{GRPO}}(\cdot)$ as the gradient estimator using classical outcome reward model $r$, and $g_{\texttt{SGPO}}(\cdot)$ as the gradient estimator using the reward $r_{\texttt{SGPO}}$ as proposed in~\cref{subsec:reward}.

In our analysis, we examine the population-level learning dynamics with $G=2$, omitting clipping and importance sampling. In practice, importance sampling and clipping are important for training stability; we omit them here to simplify the analysis and leave their theoretical treatment to future work. For simplicity, we perform our analysis in the likelihood space rather than in the parameter space directly. Indeed, we define the key quantities as follows
\begin{equation*}
p \doteq \pi_{\theta_1}(a_1=2,|,\x) = \tfrac{e^{\theta_1^{\x,2}}}{e^{\theta_1^{\x,1}}+e^{\theta_1^{\x,2}}}, \qquad q \doteq \pi_{\theta_2}(a_2=2,|,\x,a_1=2) = \tfrac{e^{\theta_2^{\x,2,2}}}{e^{\theta_2^{\x,2,1}}+e^{\theta_2^{\x,2,2}}}.
\end{equation*}
Note that the original $4$-dimensional parameter space defined by $\theta_1^{\x,1}$, $\theta_1^{\x,2}$, $\theta_2^{\x,2,1}$ and $\theta_2^{\x,2,2}$ in $\br$ is reduced to a $2$-dimensional likelihood space defined by $p, q \in [0,1]$. 

For our simple stylized model, we compute the score functions in terms of likelihood parameters $p,q$ as follows, 
\begin{equation*}
s(a_1=1\,|\,\x) = \begin{bmatrix} p \\ -p \\ 0 \\ 0 \end{bmatrix}, \quad 
s(a_1=2\,|\,\x) = \begin{bmatrix} p-1 \\ 1-p \\ 0 \\ 0 \end{bmatrix}, 
\end{equation*}
and 
\begin{equation*}
s(a_2=1\,|\,\x,a_1=2) = \begin{bmatrix} 0 \\ 0 \\ q \\ -q \end{bmatrix},\quad  s(a_2=2\,|\,\x,a_1=2) = \begin{bmatrix} 0 \\ 0 \\ q-1 \\ 1-q \end{bmatrix}.
\end{equation*}
The responses can be drawn i.i.d. from the distribution as follows, 
\begin{equation*}
(a_1,a_2) = \begin{cases}
(1,1), & \text{w.p. $1-p$}, \\
(2,1), & \text{w.p. $p(1-q)$}, \\
(2,2), & \text{w.p. $pq$}. 
\end{cases}. 
\end{equation*}
The SGPO and GRPO training dynamics with population-level policy gradient can be computed exactly for the stylized model as follows, 
\begin{equation*}
\bar{g}_{\texttt{SGPO}}(\theta) = \EE[g_{\texttt{SGPO}}(\theta)] = \tfrac{1}{2} \begin{bmatrix} 
p(p-1) \\ p(1-p) \\ p^2q(q-1) \\ p^2q(1-q)
\end{bmatrix}, \quad 
\bar{g}_{\texttt{GRPO}}(\theta) = \EE[g_{\texttt{GRPO}}(\theta)] = \tfrac{1}{2} \begin{bmatrix}
p(p-1)q \\ p(1-p)q \\ pq(q-1) \\ pq(1-q)
\end{bmatrix}. 
\end{equation*}
Since $g_{\texttt{GRPO}}(\theta)$ and $g_{\texttt{SGPO}}(\theta)$ concentrate around $\bar{g}_{\texttt{GRPO}}(\theta)$ and $\bar{g}_{\texttt{SGPO}}(\theta)$ when the number of samples in each group is sufficiently large, it is reasonable to analyze the population-level dynamics for illustration. Note that the high-probability guarantees for the sample-level dynamics can be derived using concentration inequalities under certain conditions. 

Denote $p_{\texttt{GRPO}}^{(k)}$ and $q_{\texttt{GRPO}}^{(k)}$ as the value of quantity $p$ and $q$ at iteration $k$ under GRPO. Analogously, $p_{\texttt{SGPO}}^{(k)}$ and $q_{\texttt{SGPO}}^{(k)}$ are the corresponding probabilities $p$ and $q$ at iteration $k$ under SGPO. We can explicitly write down the SGPO and GRPO update rules with $\eta = 1$ as follows, 
\begin{equation}\label{eq:update_minimal}
\begin{cases}
p_{\texttt{SGPO}}^{(k+1)} = \exp (f_{11}(p_{\texttt{SGPO}}^{(k)})), \\
q_{\texttt{SGPO}}^{(k+1)} = \exp (f_{12}(p_{\texttt{SGPO}}^{(k)},q_{\texttt{SGPO}}^{(k)})),
\end{cases}
\text{and}\quad 
\begin{cases}
p_{\texttt{GRPO}}^{(k+1)} = \exp(f_{21}(p_{\texttt{GRPO}}^{(k)},q_{\texttt{GRPO}}^{(k)})),  \\
q_{\texttt{GRPO}}^{(k+1)} = \exp(f_{22}(p_{\texttt{GRPO}}^{(k)},q_{\texttt{GRPO}}^{(k)}) ), 
\end{cases}, 
\end{equation}
where the functions $f_{ij}$ are defined by 
\begin{equation}\label{eq:log-update}
\begin{aligned}
& f_{11}(p) = \log(p) + p(1-p) - \log(1-p+pe^{p(1-p)}), \\ 
& f_{21}(p,q) = \log(p) + p(1-p)q - \log(1-p+pe^{p(1-p)q}), \\ 
& f_{12}(p,q) = \log(q) + p^2q(1-q) - \log( 1-q+qe^{p^2q(1-q)}), \\
& f_{22}(p,q) = \log(q) + pq(1-q) - \log( 1-q+qe^{pq(1-q)}). 
\end{aligned}
\end{equation}
Our theoretical findings are summarized in the following theorem. 

\begin{theorem}\label{thm:main}
Suppose that $p_{\texttt{GRPO}}^{(0)} = q_{\texttt{GRPO}}^{(0)} = p_{\texttt{SGPO}}^{(0)} = q_{\texttt{SGPO}}^{(0)} = \frac{1}{2}$ and $\eta = 1$\footnote{We use the unit stepsize for simplicity. Our results are valid for any sufficiently small step size.} for GRPO and SGPO. Then, we have that (i) GRPO and SGPO achieve successful learning: $p_{\texttt{GRPO}}^{(k)}, q_{\texttt{GRPO}}^{(k)}, p_{\texttt{SGPO}}^{(k)}, q_{\texttt{SGPO}}^{(k)} \to 1$ as $k \to +\infty$; (ii) SGPO outperforms GRPO in learning the ``good'' action in the first step: $p_{\texttt{SGPO}}^{(k)} > p_{\texttt{GRPO}}^{(k)}$ for all $k \geq 1$; (iii) SGPO outperforms GRPO in learning the optimal policy: $p_{\texttt{SGPO}}^{(k)} q_{\texttt{SGPO}}^{(k)} > p_{\texttt{GRPO}}^{(k)} q_{\texttt{GRPO}}^{(k)}$ for all $k\geq 1$. 
\end{theorem}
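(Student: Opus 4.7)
The plan is to carry out a discrete-time population analysis under a softmax parameterization $p = \sigma(\theta_p)$, $q = \sigma(\theta_q)$, then prove the three claims in order. First, I would enumerate the nine ordered pairs of samples from the restricted support $\{(1,1),(2,1),(2,2)\}$: with group size $G=2$, the GRPO advantage equals $\pm 1$ when the two rewards differ and $0$ when they tie. A direct termwise computation of $\mathbb{E}[A_i \nabla_\theta \log \pi(\y_i)]$ yields the expected updates
\[
\Delta \theta_p^{\texttt{GRPO}} = p q (1-p), \quad \Delta \theta_q^{\texttt{GRPO}} = p q (1-q), \quad \Delta \theta_p^{\texttt{SPO}} = p (1-p), \quad \Delta \theta_q^{\texttt{SPO}} = p^2 q (1-q),
\]
all strictly positive on $(0,1)^2$. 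Two structural observations anchor the rest of the proof: SPO's $\theta_p$-update is \emph{independent} of $q$ and pointwise dominates GRPO's by the factor $1/q \geq 1$; and the GRPO update is symmetric in $(p,q)$, so that $p_k^{\texttt{GRPO}} = q_k^{\texttt{GRPO}}$ for every $k$.

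Claim (i) is immediate from monotonicity: strict positivity of the four increments makes $\{\theta_p^{(k)}\}, \{\theta_q^{(k)}\}$ strictly increasing, so $\{p_k\}, \{q_k\}$ are strictly increasing and bounded by $1$. If either had a limit strictly less than $1$, the corresponding update would remain bounded below by a positive constant, contradicting convergence of $\theta$; hence $p_k, q_k \to 1$. For claim (ii), introduce the SPO one-step $\theta_p$-map $F(\theta) = \theta + \sigma(\theta)(1-\sigma(\theta))$. Since $F'(\theta) = 1 + \sigma(\theta)(1-\sigma(\theta))(1-2\sigma(\theta))$ and the two factors after the $1$ are bounded in absolute value by $1/4$, we have $F'(\theta) \geq 3/4$, so $F$ is strictly increasing. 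Induction on $k$: the base case is $\theta_p^{S,(1)} = 1/4 > 1/8 = \theta_p^{G,(1)}$, and the step is
\[
\theta_p^{G,(k+1)} = \theta_p^{G,(k)} + q_k^G \sigma(\theta_p^{G,(k)})(1-\sigma(\theta_p^{G,(k)})) < F(\theta_p^{G,(k)}) < F(\theta_p^{S,(k)}) = \theta_p^{S,(k+1)},
\]
where the first strict inequality uses $q_k^G < 1$ and the second uses monotonicity of $F$ with the inductive hypothesis.

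Claim (iii) is the main obstacle. Setting $x = p_k^G = q_k^G$, $a = p_k^S - x$ (positive by (ii)), and $b = x - q_k^S$, a direct expansion gives $p_k^S q_k^S - p_k^G q_k^G = x(a - b) - a b$. The plan is first to prove $q_k^S \leq q_k^G$ by an analogous induction, using the comparison $\Delta \theta_q^{\texttt{SPO}} = (p^S)^2 q(1-q) \leq p^G q(1-q) = \Delta \theta_q^{\texttt{GRPO}}$, which reduces to the auxiliary bound $(p_k^S)^2 \leq p_k^G$; I would prove this separately by tracking how the $\theta_p$-gap translates into a bounded $p$-gap as $p^G$ approaches $1$, combined with concavity of the square root. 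With $b \geq 0$ in hand, the hypothesis $q_k^S > 31/32$ yields $b < 1/32$ and $x > 31/32$, so $x(a-b) - a b \geq \tfrac{31}{32}(a-b) - \tfrac{a}{32}$, and positivity reduces to the invariant $30 a > 31 b$, i.e., $30(p_k^S - p_k^G) > 31(p_k^G - q_k^S)$. The hardest step is to propagate this quantitative invariant inductively through the coupled dynamics, comparing $\Delta p^S + \Delta q^S$ against $2 \Delta p^G$ in the regime where $x$ is close to $1$; the threshold $31/32$ is chosen precisely so that in this late regime the quadratic penalty $a b$ is absorbed by the linear slack $x(a-b)$, closing the argument.
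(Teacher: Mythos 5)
Your computations of the four population increments $\Delta\theta_p^{\texttt{SPO}}=p(1-p)$, $\Delta\theta_q^{\texttt{SPO}}=p^2q(1-q)$, $\Delta\theta_p^{\texttt{GRPO}}=pq(1-p)$, $\Delta\theta_q^{\texttt{GRPO}}=pq(1-q)$ agree with the paper's $\bar g_{\texttt{SPO}}$ and $\bar g_{\texttt{GRPO}}$, and your treatment of (i) and (ii) is correct and essentially the same argument as the paper, phrased more cleanly in the reduced sigmoid coordinates: strict positivity of the increments plus $F'(\theta)=1+\sigma(1-\sigma)(1-2\sigma)\geq 3/4>0$ is exactly what the paper proves as Lemma~\ref{lem:monotone-1} and~\ref{lem:monotone-2} in $p$-space, and the GRPO symmetry $p_k^G=q_k^G$ you invoke is also established in the paper's proof of (iii).

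For (iii), however, your plan has two genuine gaps, and this is precisely where your route diverges from the paper's. First, your claim that $q^S_k\leq q^G_k$ ``reduces to $(p^S_k)^2\leq p^G_k$'' is not a reduction: the SPO and GRPO $q$-updates act on different base points $q^S_k$ and $q^G_k$, and for $q\in(1/2,1)$ the factor $q(1-q)$ is \emph{decreasing} in $q$, so $q^S_k<q^G_k$ makes $q^S_k(1-q^S_k)>q^G_k(1-q^G_k)$ and the two factors pull in opposite directions. Closing this induction requires both the auxiliary bound $(p^S_k)^2\leq p^G_k$ (a nontrivial uniform bound on the $\theta_p$-gap, roughly $\theta_p^{S,(k)}-\theta_p^{G,(k)}\leq\log 2$ asymptotically, which you do not establish) \emph{and} a monotonicity argument in the strength parameter (e.g., $\theta\mapsto\theta+c\,\sigma(\theta)(1-\sigma(\theta))$ is increasing in both $\theta$ and $c$); you state neither. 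Second, and more importantly, the reduction $x(a-b)-ab\geq\tfrac{30a-31b}{32}$ only uses $x>31/32$ correctly when $a\geq b$, which is itself implied only by the invariant $30a>31b$ you are trying to maintain; you then acknowledge that propagating $30a>31b$ through the coupled dynamics is ``the hardest step'' without giving an argument. As written, (iii) is therefore not proved. The paper sidesteps both difficulties entirely: it never compares $q^S_k$ to $q^G_k$, and instead proves a sharp one-step algebraic inequality $C(x,y)^2>A(x)B(x,y)$ for $\tfrac{31}{32}<y<x<1$ (\cref{lem:keyABC}) by fifth-order Taylor expansion of the three one-step reciprocal update maps, then runs a clean induction on $\sqrt{p^S_kq^S_k}>p^G_k$ using monotonicity of $f_{21}$. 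Your decomposition into $x,a,b$ is a reasonable attack, and your intuition for why $31/32$ is a sensible threshold is in the right spirit, but the two missing invariants would each require a separate global estimate on the trajectory that is at least as delicate as the paper's Lemma~\ref{lem:keyABC}.
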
 
\begin{proof}
To show (i), recall that the sequence $(p_{\texttt{SGPO}}^{(k)})_{k\in\bn}$ is strictly increasing and bounded in $(0,1)$ from \cref{lem:dynamics-1,lem:dynamics-2}, so it converges to some value $c\in(0,1]$. Taking limit as $k\to\infty$: 
\begin{equation*}
1 = \lim_{k\to\infty} \tfrac{p_{\texttt{SGPO}}^{(k+1)}}{p_{\texttt{SGPO}}^{(k)}} = \lim_{k\to\infty} \tfrac{1}{(1-p_{\texttt{SGPO}}^{(k)})e^{-p_{\texttt{SGPO}}^{(k)}(1-p_{\texttt{SGPO}}^{(k)})}+p_{\texttt{SGPO}}^{(k)}} = \tfrac{1}{(1-c)e^{-c(1-c)}+c}. 
\end{equation*}
Using the simple Taylor lower bound $e^{-x}\geq 1-x$, we have 
\begin{equation*}
1 = \tfrac{1}{(1-c)e^{-c(1-c)}+c} \geq \tfrac{1}{(1-c)(1-c(1-c))+c} \implies (c-1)^2 \leq 0 \implies c=1. 
\end{equation*}
This shows $p_{\texttt{SGPO}}^{(k)}\to 1$ as $k\to\infty$. Similarly, we can show $q_{\texttt{GRPO}}^{(k)}, p_{\texttt{SGPO}}^{(k)}, q_{\texttt{SGPO}}^{(k)} \to 1$ as $k\to\infty$.
    
To show (ii), consider the base case: 
\begin{equation*}
\begin{aligned}
p_{\texttt{SGPO}}^{(1)} &= \exp ( f_{11}(p_{\texttt{SGPO}}^{(0)}) ) = \exp ( \log p_{\texttt{SGPO}}^{(0)} + h_{p_{\texttt{SGPO}}^{(0)}}(p_{\texttt{SGPO}}^{(0)}(1-p_{\texttt{SGPO}}^{(0)})) ) \\
& = \exp (\log p_{\texttt{GRPO}}^{(0)} + h_{p_{\texttt{GRPO}}^{(0)}}(p_{\texttt{GRPO}}^{(0)}(1-p_{\texttt{GRPO}}^{(0)})) ) \\
& > \exp (\log p_{\texttt{GRPO}}^{(0)} + h_{p_{\texttt{GRPO}}^{(0)}}(p_{\texttt{GRPO}}^{(0)}(1-p_{\texttt{GRPO}}^{(0)})q_{\texttt{GRPO}}^{(0)}) ) =\exp ( f_{21}(p_{\texttt{GRPO}}^{(0)},q_{\texttt{GRPO}}^{(0)}) ) = p_{\texttt{GRPO}}^{(1)},
\end{aligned}
\end{equation*}
where the inequality follows from \cref{lem:monotone-2}. Thus, we use induction and assume $p_{\texttt{SGPO}}^{(k)}>p_{\texttt{GRPO}}^{(k)}$ for some $k\geq 1$. Then, we have
\begin{equation*}
\begin{aligned}
p_{\texttt{SGPO}}^{(k+1)} & = \exp(f_{11}(p_{\texttt{SGPO}}^{(k)})) > \exp(f_{11}(p_{\texttt{GRPO}}^{(k)}) ) = \exp(\log p_{\texttt{GRPO}}^{(k)} + h_{p_{\texttt{GRPO}}^{(k)}}(p_{\texttt{GRPO}}^{(k)}(1-p_{\texttt{GRPO}}^{(k)})) ) \\
&> \exp (\log p_{\texttt{GRPO}}^{(k)} + h_{p_{\texttt{GRPO}}^{(k)}}(  p_{\texttt{GRPO}}^{(k)}(1-p_{\texttt{GRPO}}^{(k)})q_{\texttt{GRPO}}^{(k)}) ) =\exp ( f_{21}(p_{\texttt{GRPO}}^{(k)},q_{\texttt{GRPO}}^{(k)}) ) = p_{\texttt{GRPO}}^{(k+1)}, 
\end{aligned}
\end{equation*}
where the first inequality uses \cref{lem:monotone-1} and the second one uses \cref{lem:monotone-2}. Thus, $p_{\texttt{SGPO}}^{(k+1)}>p_{\texttt{GRPO}}^{(k+1)}$ and the induction is done. We have proved that $p_{\texttt{SGPO}}^{(k)}>p_{\texttt{GRPO}}^{(k)}$ for all $k\geq 1$.
    
To show (iii), first notice that we can show $p_{\texttt{GRPO}}^{(k)}=q_{\texttt{GRPO}}^{(k)}$ for all $k\geq 0$ by induction. The base case is trivial by initialization. Suppose $p_{\texttt{GRPO}}^{(k)}=q_{\texttt{GRPO}}^{(k)}$ for some $k\geq 0$, then by noticing that $f_{21}(p,p)=f_{22}(p,p)$, we have 
\begin{equation*}
\begin{aligned}
p_{\texttt{GRPO}}^{(k+1)} & = \exp (f_{21}(p_{\texttt{GRPO}}^{(k)},q_{\texttt{GRPO}}^{(k)}) ) = \exp ( f_{21}(p_{\texttt{GRPO}}^{(k)},p_{\texttt{GRPO}}^{(k)}) ) \\
& = \exp(f_{22}(p_{\texttt{GRPO}}^{(k)},p_{\texttt{GRPO}}^{(k)}) ) = \exp ( f_{22}(p_{\texttt{GRPO}}^{(k)},q_{\texttt{GRPO}}^{(k)}) ) = q_{\texttt{GRPO}}^{(k+1)}. 
\end{aligned}
\end{equation*}
Thus, by induction, $p_{\texttt{GRPO}}^{(k)}=q_{\texttt{GRPO}}^{(k)}$ for all $k\geq 0$. Now, we can reduce the update rule of $p_{\texttt{GRPO}}^{(k)}$ as
\begin{equation*}
p_{\texttt{GRPO}}^{(k+1)} = \tfrac{1}{(1/p_{\texttt{GRPO}}^{(k)}-1)\exp(-(p_{\texttt{GRPO}}^{(k)})^2(1-p_{\texttt{GRPO}}^{(k)}))+1} . 
\end{equation*}
In addition, we recall the update rule of $p_{\texttt{SGPO}}^{(k)}$ and $q_{\texttt{SGPO}}^{(k)}$ as 
\begin{equation*}
\begin{array}{lcl}
p_{\texttt{SGPO}}^{(k+1)} & = & \tfrac{1}{(1/p_{\texttt{SGPO}}^{(k)}-1)\exp( -p_{\texttt{SGPO}}^{(k)}(1-p_{\texttt{SGPO}}^{(k)}))+1} \\
q_{\texttt{SGPO}}^{(k+1)} & = & \tfrac{1}{(1/q_{\texttt{SGPO}}^{(k)}-1)\exp( -(p_{\texttt{SGPO}}^{(k)})^2q_{\texttt{SGPO}}^{(k)}(1-q_{\texttt{SGPO}}^{(k)}))+1} 
\end{array}. 
\end{equation*}
It suffices to show $p_{\texttt{SGPO}}^{(k)}q_{\texttt{SGPO}}^{(k)}>(p_{\texttt{GRPO}}^{(k)})^2$ for all $k\geq 1$. We  prove by induction. For the base case, 
\begin{equation*}
\sqrt{p_{\texttt{SGPO}}^{(1)}q_{\texttt{SGPO}}^{(1)}} = \sqrt{\tfrac{1}{1+e^{-1/4}}\cdot \tfrac{1}{1+e^{-1/16}}} > \tfrac{1}{1+e^{-1/8}} = p_{\texttt{GRPO}}^{(1)}. 
\end{equation*}
The above inequality holds true since \cref{lem:monotone-4} implies
\begin{equation*}
2\log(1+e^{-1/8} ) > \log ( 1+e^{-1/4} )+\log ( 1+e^{-1/16} ),
\end{equation*}
It remains to show that $p_{\texttt{SGPO}}^{(k)}q_{\texttt{SGPO}}^{(k)}>(p_{\texttt{GRPO}}^{(k)})^2$ implies $p_{\texttt{SGPO}}^{(k+1)}q_{\texttt{SGPO}}^{(k+1)}>(p_{\texttt{GRPO}}^{(k+1)})^2$ for $k\geq 1$. By \cref{lem:dynamics-3}, we know $p_{\texttt{SGPO}}^{(k)}>q_{\texttt{SGPO}}^{(k)}$ for all $k\geq 1$. Thus, \cref{lem:keyABC} implies that 
\begin{equation*}
p_{\texttt{SGPO}}^{(k+1)}q_{\texttt{SGPO}}^{(k+1)} = \tfrac{1}{A(p_{\texttt{SGPO}}^{(k)})B(p_{\texttt{SGPO}}^{(k)},q_{\texttt{SGPO}}^{(k)})} > \tfrac{1}{C\left(\sqrt{p_{\texttt{SGPO}}^{(k)}q_{\texttt{SGPO}}^{(k)}}\right)^2}. 
\end{equation*}
Using \cref{lem:monotone-3}, we complete the induction by applying our induction hypothesis: 
\begin{equation*}
\tfrac{1}{C\left(\sqrt{p_{\texttt{SGPO}}^{(k)}q_{\texttt{SGPO}}^{(k)}}\right)^2} = \left(\exp \left( f_{21}\left(\sqrt{p_{\texttt{SGPO}}^{(k)}q_{\texttt{SGPO}}^{(k)}},\sqrt{p_{\texttt{SGPO}}^{(k)}q_{\texttt{SGPO}}^{(k)}}\right) \right)\right)^2 > \left(\exp( f_{21}(p_{\texttt{GRPO}}^{(k)},p_{\texttt{GRPO}}^{(k)}) )\right)^2 = (p_{\texttt{GRPO}}^{(k+1)})^2. 
\end{equation*}
This completes the proof. 
\end{proof}

\begin{table}[t]
\centering
\caption{Evaluation results on offline RL training. For each model, we report the baseline performance before RL training. We then report RL training results that uses only negative samples and positive samples, respectively. Performance across validation and training dataset (\texttt{LIMO}) is shown.}
\label{tab:offline}
\resizebox{0.8\textwidth}{!}{
\begin{tabular}{l|ccccc}
\toprule 
& \makecell{\textbf{\texttt{AMC23}}\\\texttt{avg@16}} & \makecell{\textbf{\texttt{AIME24}}\\\texttt{avg@16}} & \makecell{\textbf{\texttt{MATH500}}\\\texttt{pass@1}} & \makecell{\textbf{\texttt{Olympiads}}\\\texttt{pass@1}} & \makecell{\textbf{\texttt{LIMO}}\\\texttt{pass@1}} \\
\midrule
\multicolumn{6}{c}{\textbf{Qwen2.5-14B-Instruct}} \\
\midrule
Baseline & 58.59 & 14.58 & \textbf{80.40} & 41.78 & 31.70 \\
Negative Samples only & \textbf{61.88} & \textbf{15.21} & \textbf{80.40} & \textbf{42.37} & 30.11 \\
Positive Samples only & 61.72 & 14.58 & 79.80 & 42.07 & \textbf{38.68} \\
\midrule
\multicolumn{6}{c}{\textbf{Qwen2.5-32B-Instruct}} \\
\midrule
Baseline & 64.22 & 17.08 & \textbf{83.60} & 45.93 & 34.64 \\
Negative Samples only & \textbf{69.53} & \textbf{20.42} & 83.00 & 46.37 & 36.47 \\
Positive Samples only & 66.87 & 18.75 & \textbf{83.60} & \textbf{47.41} & \textbf{41.86} \\
\bottomrule
\end{tabular}
}
\end{table}

\begin{remark}
Theorem~\ref{thm:main} presents one of the first theoretical analyses of GRPO with multiple samples and multi-step reasoning in the context of LLM reasoning. The first part establishes that SGPO converges to the optimal policy. The second and third parts demonstrate that SGPO both accelerates the acquisition of partially correct reasoning steps and preserves partial reasoning ability even when the final answer is incorrect. Importantly, the theorem provides a \textbf{per-iteration} comparison of learning under different reward mechanisms -- an aspect rarely examined in previous works. The provable improvement in learning the optimal policy is also consistent with our numerical findings. We plot the resulting learning curves of our numerical simulation in \cref{fig:SGPOvsGRPO}. The left panel shows the probability of selecting the ``good'' action in the first step at iteration $k$ (i.e., $p_{\texttt{SGPO}}^{(k)}$ vs. $p_{\texttt{GRPO}}^{(k)}$), while the right panel shows the probability of learning the optimal policy (i.e., $p_{\texttt{SGPO}}^{(k)}q_{\texttt{SGPO}}^{(k)}$ vs. $p_{\texttt{GRPO}}^{(k)}q_{\texttt{GRPO}}^{(k)}$). The results align with the predictions of \cref{thm:main}, demonstrating that the likelihood of learning the optimal policy under SGPO consistently exceeds that of GRPO across training.

\begin{figure}[!ht]
\centering
\begin{subfigure}[b]{0.5\textwidth}
\includegraphics[width=\textwidth]{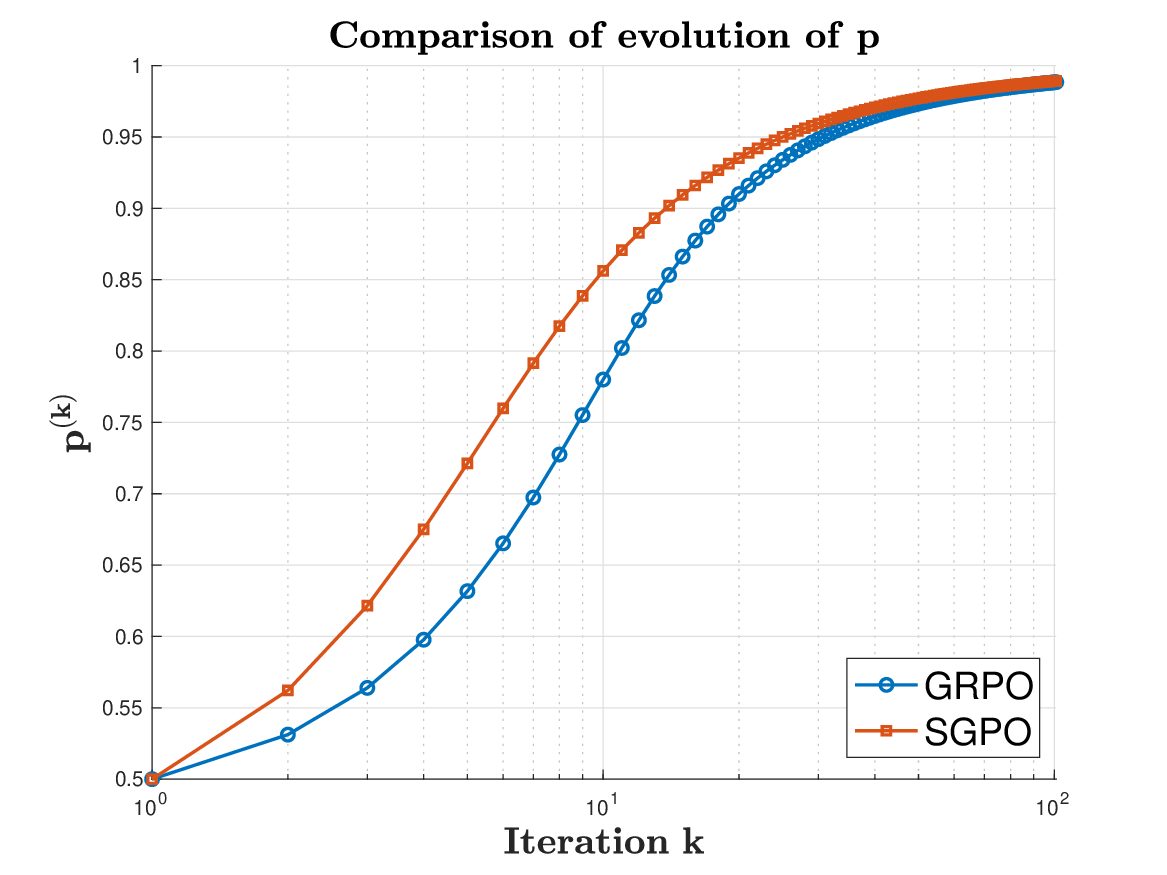}
\end{subfigure}
\hspace{-2ex}
\begin{subfigure}[b]{0.5\textwidth}
\includegraphics[width=\textwidth]{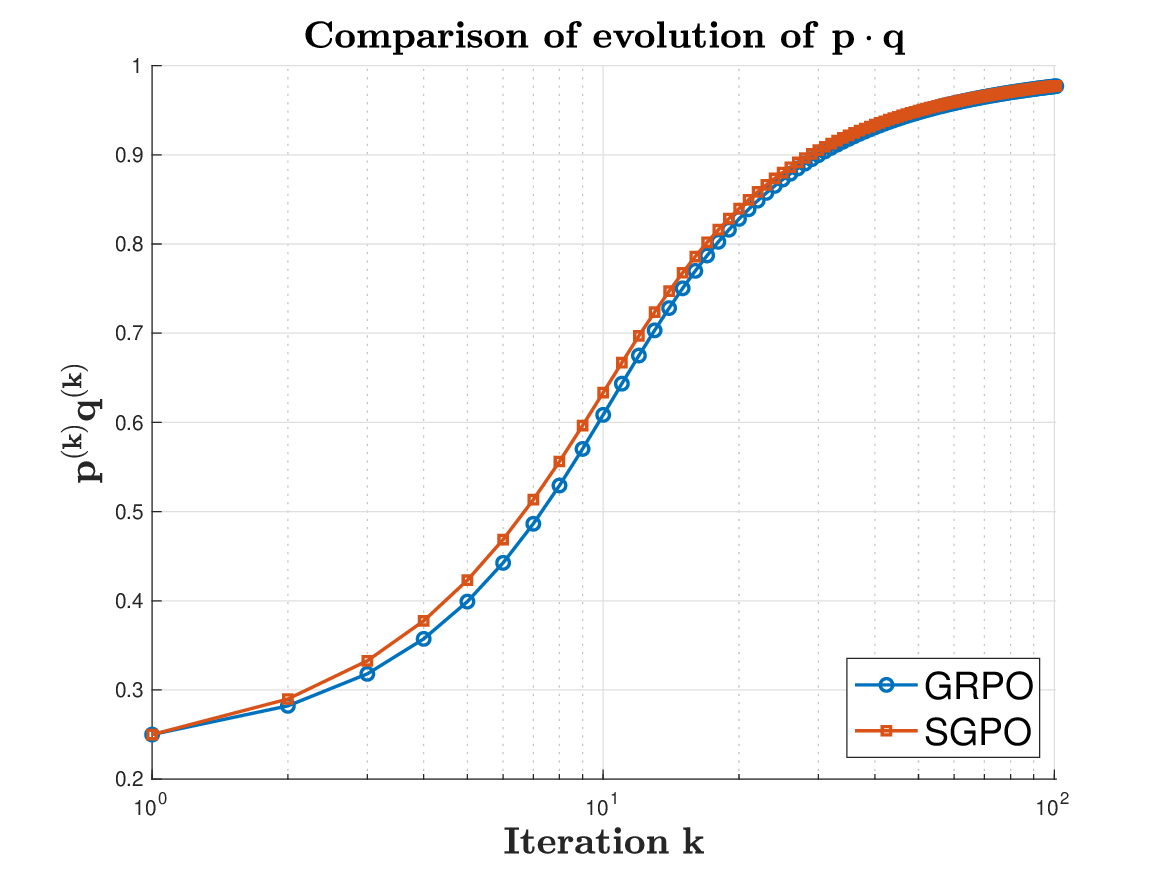}
\end{subfigure}
\caption{Learning dynamics of GRPO and SGPO in the simplified setting.}
\label{fig:SGPOvsGRPO}
\end{figure}
\end{remark}

\section{Experiments}
We present the benefits of differentiating negative samples through experiments in both offline and online settings. Offline RL is more computationally efficient, offering faster training, reduced memory consumption, and improved stability. In contrast, online RL provides greater flexibility and learning capacity, and has become the standard approach in large-scale reasoning models such as DeepSeek-R1~\citep{Guo-2025-Deepseek}.

\subsection{Offline Training}
For baselines, we consider strong models without further fine-tuned on math-specific SFT datasets, namely \texttt{Qwen2.5-14B-Instruct} and \texttt{Qwen2.5-32B-Instruct}. Prior work showed that a small set of carefully curated prompts significantly enhance the reasoning capability. Accordingly, we adopt the \texttt{GAIR/LIMO} dataset~\citep{Ye-2025-Limo} as the training set, which has demonstrated strong potential for improving the reasoning performance of large-scale (32B) models in offline SFT. Evaluation is conducted on four standard math reasoning benchmarks: \texttt{AIME24}, \texttt{AMC23}, \texttt{MATH500}~\citep{Hendrycks-2021-Measuring}, and \texttt{OlympiadBench}~\citep{He-2024-Olympiadbench}. Our aim is to highlight the rich learning signal contained in all-negative-sample groups, showing that training exclusively on them can still yield performance gains. For benchmarks with fewer than 100 questions (\texttt{AMC23}, \texttt{AIME24}), we report \texttt{avg@16} results with a decoding temperature of $0.6$ and $\texttt{Top\_P}=0.95$. For benchmarks with more than 100 questions, we report \texttt{pass@1} results using greedy decoding. Here, pass@k is the percentage of prompts for which at least one of the k sampled responses is correct, while avg@k is the average percentage of corrected samples among the $k$ samples. The maximum decoding length is set to $32768$ tokens.

We conduct all response generation and model updates using offline RL~\citep{Peters-2007-Reinforcement} with the standard GRPO mechanism. Specifically, the model is updated with advantages estimated from the offline dataset~\citep[see, e.g.,][]{Peng-2019-Advantage, Li-2024-Remax}. For each prompt, we sample six responses per group and identify all-negative-sample groups in which all responses yield incorrect answers. Within these groups, we apply the step-wise judge model to assign differentiated rewards to negative samples, which are then used for offline RL updates. The model is trained for three epochs with a learning rate of $2 \times 10^{-6}$. As a contrastive baseline, we also perform offline RL using only positive rollouts with correct answers. This parallel setup enables a direct comparison between learning from exclusively negative reasoning trajectories and from exclusively positive ones.

We conduct offline RL training to demonstrate that utilizing all-negative-sample groups can enhance the reasoning abilities of LLMs. For comparison, we also include positive-only offline RL training. {As shown in Table~\ref{tab:offline}, SGPO with negative samples improves average performance and performs competitively on most benchmarks, and in some cases even surpasses models trained solely on positive samples.} In particular, in the 14B model experiment, training on negative samples yields improvements on four benchmarks relative to the positive-sample baseline. These findings underscore the utility of negative samples, which should not be discarded in online GRPO training; see further comments in~\cref{subsec:discussion}. 

\subsection{Online Training}
For baselines, we consider applying \texttt{Qwen2.5-14B-Instruct}, \texttt{Qwen2.5-32B-Instruct}, \texttt{QwQ-32B}, \texttt{DeepSeek-R1-Distill-Qwen-7B} and \texttt{DeepSeek-R1-Distill-Llama-8B}. Online GRPO training is implemented using the \texttt{verl} framework~\citep{Sheng-2025-HybridFlow}. For the step-wise judge model, we adopt a diverse set of LLMs, ranging from closed-source models with strong reasoning capabilities (\texttt{o4-mini}, \texttt{Claude3.7}) to open-source models that are more accessible to the community, including \texttt{DeepSeek-V3-0324}, \texttt{Qwen3-235B-A22B}, and \texttt{QwQ-32B}.
\begin{table}[!t]
\centering
\caption{Evaluation results on online RL training. We refer to \textsc{Baseline} as the performance of the original model without RL finetuning. \textbf{Overall} is average performance across all the benchmarks. Note that the training dataset is \texttt{AIME1997-2023}. For \texttt{DeepSeek-R1-Distill-Qwen-7B}, we report additional results, including (i) compatibility with more judge models and (ii) ablation on the stability parameters $\beta$ and $\gamma$.} \label{tab:online}
\resizebox{\textwidth}{!}{%
\begin{tabular}{l|ccccccccc|c}
\toprule
& \textbf{\texttt{Kaoyan}} & \textbf{\texttt{GradeMath}} & \textbf{\texttt{MATH500}} & \textbf{\texttt{Olympiads}} & \textbf{\texttt{CHMath24}} & \textbf{\texttt{AIME25}} & \textbf{\texttt{AIME24}} & \textbf{\texttt{GaoKao}} & \textbf{\texttt{AMC23}} & \textbf{Overall} \\
& \texttt{pass@1} & \texttt{pass@1} & \texttt{pass@1} & \texttt{pass@1} & \texttt{avg@16} & \texttt{avg@16} & \texttt{avg@16} & \texttt{avg@16} & \texttt{avg@16} & \texttt{avg} \\
\midrule
\multicolumn{11}{c}{\textbf{DeepSeek-R1-Distill-Qwen-7B}}\\
\midrule
\textsc{Baseline} & 50.25 & 41.43 & 87.00 & 49.93 & 73.75 & \textbf{40.62} & 52.92 & 80.22 & 89.53 & 62.85 \\
\textsc{GRPO} & 55.78 & 43.33 & 89.40 & \textbf{56.00} & 71.04 & 36.68 & 52.08 & 80.30 & 88.91 & 63.72 \\
\textsc{SGPO$+$\texttt{o4-mini-0416}} & 57.79 & 46.19 & 90.80 & 54.67 & 75.00 & 38.33 & 54.58 & 81.33  & 90.00  & 65.41 \\
\textsc{SGPO$+$\texttt{DeepSeek-V3-0324}} & 54.77 & \textbf{47.17} & 91.00 & 55.11 & \textbf{77.29}  & 40.42 & \textbf{56.87} & \textbf{82.28} & 90.83  & \textbf{66.19} \\
\textsc{SGPO$+$\texttt{Qwen3-235B-A22B}} & 56.78 & 46.67 & \textbf{92.00} & 54.67 & 73.33 & 37.92 & 55.63 & 81.17 & 90.63 & 65.42  \\
\textsc{SGPO$+$\texttt{QwQ-32B}} & 52.26 & 45.24 & \textbf{92.00} & 53.78 & 75.00 & 35.21 & 56.46 & \textbf{82.28} & \textbf{91.88} & 64.91  \\
\textsc{SGPO$+$\texttt{QwQ-32B} w/o $\{\beta$,$\gamma\}$} & \textbf{58.29} & 42.38 & 90.20 & 55.11 & 74.58 & 38.69 & 53.63 & 81.24 & 88.75 & 65.08  \\
\midrule
\multicolumn{11}{c}{\textbf{DeepSeek-R1-Distill-Llama-8B}} \\
\midrule
\textsc{Baseline} & 29.15 & 23.81 & 77.40 & 41.48 & \textbf{61.46} & 27.92 & \textbf{42.29} & \textbf{72.78} & 87.97 & 51.58 \\
\textsc{GRPO} & 35.68 & 28.33 & \textbf{84.00} & 46.32 & 57.08 & \textbf{28.33} & 42.08 & 68.99 & 86.72 & 53.06 \\
\textsc{SGPO$+$\texttt{Claude-3.7}} & \textbf{39.70} & \textbf{29.05} & 83.60 & \textbf{48.44} & 58.96 & 24.58 & 39.37 & 71.52 & \textbf{89.06} & \textbf{53.81} \\
\midrule
\multicolumn{11}{c}{\textbf{Qwen2.5-14B-Instruct}} \\
\midrule
\textsc{Baseline} & 37.69 & 49.52 & 80.40 & 41.78 & 21.88 & 13.13 & \textbf{14.58} & \textbf{41.14} & 58.59 & 39.85 \\
\textsc{GRPO} & \textbf{43.22} & 47.14 & 80.20 & 43.11 & 21.88 & 13.13 & 13.33 & 39.16 & \textbf{59.84} & 40.11 \\
\textsc{SGPO$+$\texttt{o4-mini-0416}} & 38.69 & \textbf{53.33} & \textbf{81.00} & \textbf{44.00} & \textbf{22.92} & \textbf{16.67} & 14.17 & 39.00 & 59.22 & \textbf{41.00} \\
\midrule
\multicolumn{11}{c}{\textbf{Qwen2.5-32B-Instruct}} \\
\midrule
\textsc{Baseline} & 45.73 & \textbf{53.81} & \textbf{83.60} & 45.93 & 26.87 & 12.29 & 17.08 & 44.15 & 64.22 & 43.74 \\
\textsc{GRPO} & \textbf{48.24} & 52.86 & 83.20 & 45.93 & 22.50 & 12.08 & \textbf{21.67} & \textbf{45.73} & 67.34 & 44.39 \\
\textsc{SGPO$+$\texttt{o4-mini-0416}} & \textbf{48.24} & \textbf{53.81} & 83.00 & \textbf{46.81} & \textbf{29.79} & \textbf{14.58} & 19.58 & 45.09 & \textbf{69.53} & \textbf{45.06} \\
\midrule
\multicolumn{11}{c}{\textbf{QwQ-32B}} \\
\midrule
\textsc{Baseline} & 64.32 & 62.38 & 94.60 & 68.74 & \textbf{89.39} & \textbf{68.54} & 77.71 & 86.88 & 97.03 & 78.84 \\
\textsc{GRPO} & 71.36 & 63.81 & 94.60 & 69.48 & 88.75 & 64.38 & 75.83 & \textbf{87.11} & 97.03 & 79.15 \\
\textsc{SGPO$+$\texttt{DeepSeek-V3-0324}} & \textbf{73.37} & \textbf{64.76} & \textbf{95.00} & \textbf{70.22} & 88.33 & 66.46 & \textbf{78.33} & \textbf{87.11} & \textbf{97.97} & \textbf{80.17} \\
\bottomrule
\end{tabular}
}
\vspace{-2em}
\end{table}

Compared to offline RL, online RL yields larger improvements in a model's reasoning capabilities. Since our baselines already include strong distillation models, some benchmarks used in offline evaluation are nearing saturation. To provide a better assessment, we expand our evaluation suite beyond \texttt{AMC23}, \texttt{AIME24}, \texttt{MATH500}, and \texttt{OlympiadBench} by including \texttt{AIME25}, \texttt{GradeSchool}~\citep{Ye-2025-Limo}, \texttt{CHMath24}, \texttt{Kaoyan}, and \texttt{Gaokao}. Specifically, \texttt{CHMath24} is the benchmark from the 2024 Chinese High School Mathematics League Competition, \texttt{Gaokao} from China’s 2024 National College Entrance Examination, \texttt{Kaoyan} from the Chinese Graduate School Entrance Examinations, and \texttt{GradeSchool} targets elementary-level mathematical reasoning. Among these, \texttt{CHMath24} and \texttt{Gaokao} each contain fewer than $100$ questions, for which we apply the temperature-based decoding for evaluation.

For GRPO training, we use the \texttt{AIME} collections from 1997 to 2023 provided in DeepScaler~\citep{Luo-Deepscaler-2025}, training for 12 epochs. All training questions are in English, while evaluation benchmarks include multilingual questions. Notably, negative samples learned during training generalize well to out-of-domain mathematical reasoning tasks. SGPO training follows the same setup. With batch-simultaneous processing, judge model calls take ~90 seconds per batch of negatives, adding ~10\% wall-clock time relative to rollout and update. Step-wise supervision is applied only to all-negative-sample groups during the first three epochs, as we expect this duration to suffice for the model to internalize corrective signals; beyond this point, unresolved examples are more indicative of model capacity limits than learnability. Accordingly, the end-to-end wall-clock overhead is upper bounded by approximately $10\%\times 3/12=2.5\%$, and in practice this overhead can be offset by a small reduction of time needed to reach a target performance. For all models, rollout length is fixed at $8192$ tokens and group size at $8$. Models less than 8B are trained on 8 H100, 14B models on 16 H100, and 32B models on 32 H200. We adopt the default KL coefficient and learning rate from the \texttt{verl} training script~\citep{Sheng-2025-HybridFlow}, and use the LIMO evaluation script~\citep{Ye-2025-Limo}, both of which are standard practices in the community. That being said, these benefits are not uniform across every benchmark. When negative samples are short, highly noisy, or dominated by early derailments, step-wise judging provides less actionable signal and can even introduce additional variance through judge noise, so SGPO may match or occasionally underperform GRPO or the baseline on specific tasks. This pattern is consistent with our empirical results and motivates a more nuanced view: SGPO is most effective when the model’s failures retain informative intermediate structure (e.g., truncated near-correct trajectories or localized errors), rather than unstructured breakdowns.

A key insight from Table~\ref{tab:online} is that stronger models generate higher-quality negative samples, which aid learning. As model capability improves, so does the informativeness of its mistakes. Negative samples fall into two categories: (i) correct reasoning trajectories truncated by output length limits, and (ii) trajectories containing logical errors. The first type remains highly valuable -- yet discarded in GRPO -- since it preserves meaningful reasoning steps, motivating our step-wise judge model. The second type, though incorrect, still provides informative signals, especially when all samples fail on genuinely challenging problems. Notably, stronger distilled models average 6K tokens per response, compared to only 1K tokens for weaker base models, making truncated but informative negative samples more common in the stronger case. Likewise, their erroneous responses tend to be richer and more useful for step-level judgment.
\begin{table}[!t]
\centering
\caption{Evaluation results are reported for \texttt{DeepSeek-R1-Distill-Qwen-7B} across four independent runs. First column indicates judge models and its corresponding reward stability setup.} \label{tab:mult}
\resizebox{\textwidth}{!}{%
\begin{tabular}{l|ccccccccc|c}
\toprule
& \textbf{\texttt{Kaoyan}} & \textbf{\texttt{GradeMath}} &\textbf{\texttt{MATH500}}& \textbf{\texttt{Olympiads}} & \textbf{\texttt{CHMath24}} & \textbf{\texttt{AIME25}} & \textbf{\texttt{AIME24}} & \textbf{\texttt{GaoKao}} & \textbf{\texttt{AMC23}}  & \textbf{Overall} \\
& \texttt{pass@1} & \texttt{pass@1} & \texttt{pass@1} & \texttt{pass@1} & \texttt{avg@16} & \texttt{avg@16} & \texttt{avg@16} & \texttt{avg@16} & \texttt{avg@16} & \texttt{avg} \\
\midrule
\multicolumn{11}{c}{\textbf{DeepSeek-R1-Distill-Qwen-7B-SGPO}}\\
\midrule
\texttt{+Qwen3-235B-A22B} & 53.90 $\pm$ 2.10 & 46.55 $\pm$ 0.24 & 91.30 $\pm$ 0.87 & 53.45 $\pm$ 1.35 & 74.48 $\pm$ 1.10 & 37.40 $\pm$ 1.05 & 55.73 $\pm$ 1.39 & 81.33 $\pm$ 0.18 & 90.19 $\pm$ 0.48 & 64.92 $\pm$ 0.37 \\
\midrule
\texttt{+QwQ-32B} & 53.89 $\pm$ 1.66 & 44.88 $\pm$ 1.36 & 91.15 $\pm$ 0.84 & 53.71 $\pm$ 0.74 & 74.33 $\pm$ 1.29 & 37.03 $\pm$ 1.23 & 54.76 $\pm$ 1.87 & 81.91 $\pm$ 0.53 & 90.08 $\pm$ 1.23 & 64.64 $\pm$ 0.41 \\
\texttt{+QwQ-32B} w/o $\{\beta$,$\gamma\}$ &56.14 $\pm$ 2.76 & 44.53 $\pm$ 2.41 & 90.10 $\pm$ 0.66 & 53.64 $\pm$ 1.08 & 73.89 $\pm$ 1.13 & 38.70 $\pm$ 1.80 & 53.63 $\pm$ 2.15 & 81.24 $\pm$ 0.50 & 88.83 $\pm$ 0.81 & 64.52 $\pm$ 0.57\\
\bottomrule
\end{tabular}
}
\end{table}

\begin{figure*}[!t]
\centering
\includegraphics[width=\textwidth]{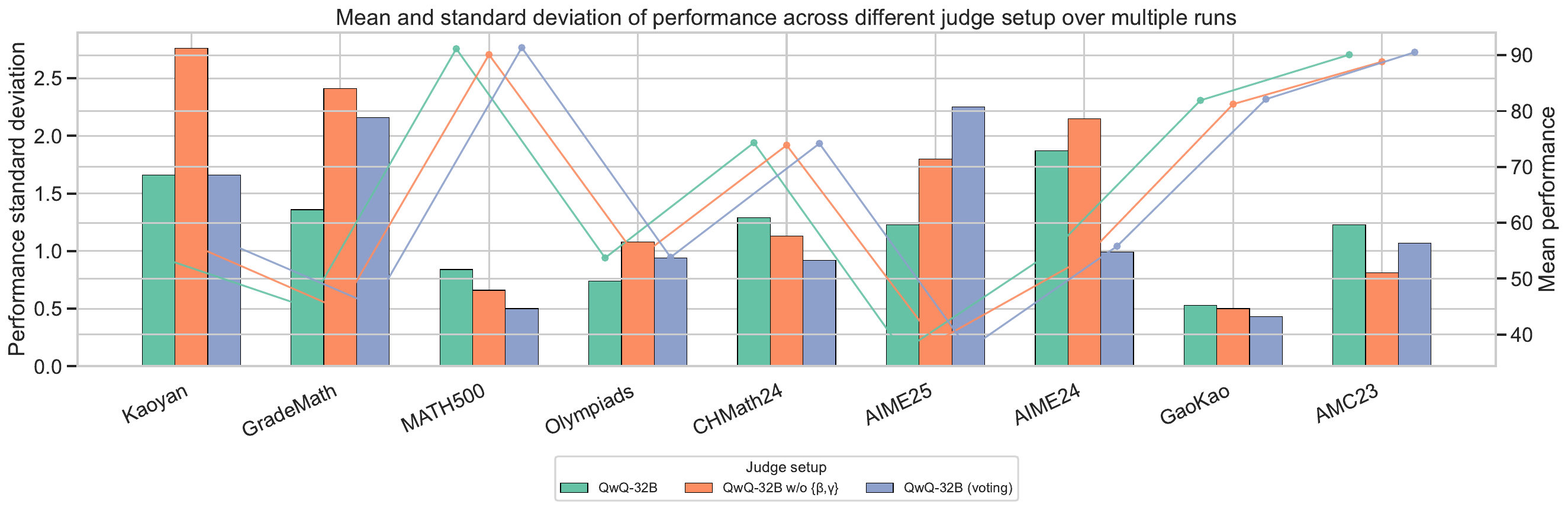}
\caption{Mean and standard deviation over multiple runs across different judge model setups.}\label{fig:plot} 
\end{figure*}

\begin{table}[!t]
\centering
\caption{Evaluation results are reported for \texttt{DeepSeek-R1-Distill-Qwen-7B} as base model and \texttt{QwQ-32B} as judge model with and without majority voting.}
\resizebox{\textwidth}{!}{%
\begin{tabular}{l|ccccccccc|c}
\toprule
& \textbf{\texttt{Kaoyan}} & \textbf{\texttt{GradeMath}} &\textbf{\texttt{MATH500}}& \textbf{\texttt{Olympiads}} & \textbf{\texttt{CHMath24}} & \textbf{\texttt{AIME25}} & \textbf{\texttt{AIME24}} & \textbf{\texttt{GaoKao}} & \textbf{\texttt{AMC23}} & \textbf{Overall} \\
& \texttt{pass@1} & \texttt{pass@1} & \texttt{pass@1} & \texttt{pass@1} & \texttt{avg@16} & \texttt{avg@16} & \texttt{avg@16} & \texttt{avg@16} & \texttt{avg@16} & \texttt{avg} \\
\midrule
\multicolumn{11}{c}{\textbf{DeepSeek-R1-Distill-Qwen-7B-SGPO}} \\
\midrule
+\texttt{QwQ-32B} & $53.89 \pm 1.66$ & $44.88 \pm 1.36$ & $91.15 \pm 0.84$ & $53.71 \pm 0.74$ & $74.33 \pm 1.29$ & $37.03 \pm 1.23$ & $54.76 \pm 1.87$ & $81.91 \pm 0.53$ & $90.08 \pm 1.23$ & $64.64 \pm 0.41$ \\
+\texttt{QwQ-32B} with voting & $56.66 \pm 1.66$ & $45.24 \pm 2.16$ & $91.35 \pm 0.50$ & $53.82 \pm 0.94$ & $74.19 \pm 0.92$ & $37.35 \pm 2.25$ & $55.81 \pm 0.99$ & $82.12 \pm 0.43$ & $90.53 \pm 1.07$ & $65.23 \pm 0.18$ \\
\bottomrule
\end{tabular}}
\label{tab:qwq}
\end{table}
\begin{table}[!t]
\centering
\caption{Evaluation results are reported in terms of \texttt{pass@16} across benchmarks. The first two columns show the total number of questions and the number solved within 16 attempts, while the last two columns report the number of questions solved by SGPO but not by GRPO ($\text{SGPO}\setminus\text{GRPO}$), and vice versa  ($\text{GRPO}\setminus\text{SGPO}$).} \label{tab:pass}
\resizebox{0.8\textwidth}{!}{%
\begin{tabular}{l|c|c|c|c}
\toprule
& SGPO - \texttt{pass@16} & GRPO - \texttt{pass@16} & $\text{SGPO}\setminus\text{GRPO}$ & $\text{GRPO}\setminus\text{SGPO}$ \\
\midrule
\texttt{AIME24} & $23/30$ & $19/30$ & 4 & 0 \\
\texttt{AIME25} & $21/30$ & $21/30$ & 1 & 1 \\
\texttt{Gaokao} & $70/79$ & $68/79$ & 2 & 0 \\
\texttt{AMC23} & $39/40$ & $38/40$ & 1 & 0 \\
\texttt{CHMath24} & $27/30$ & $25/30$ & 2 & 0 \\
\bottomrule
\end{tabular}%
}
\end{table}

\subsection{Other Ablation Studies}
To assess reliability of judge models, we evaluate our approach not only with strong closed-source reasoning models but also with publicly available models of weaker capacity: \texttt{DeepSeek-V3}, \texttt{Qwen3-235B} and \texttt{QwQ-32B}. As shown in Table~\ref{tab:online} (best-tuned results) and Table~\ref{tab:mult} (multiple runs with weaker judges), performance remains stable, indicating that weaker judges do not significantly degrade outcomes. We attribute this reliability to two design choices: (i) first-step error identification with the reference answer. SGPO requires the judge only to verify each step against the reference, not to solve the problem, thereby reducing task difficulty and avoiding the pitfalls of generic PRMs; (ii) reward stability parameters $\beta$ and $\gamma$, which set the update inertia and reduce sensitivity to rewards from earlier failed rollouts. As confirmed by ablations, removing $\beta$ and $\gamma$ increases variance and weakens performance. To improve verification, we incorporate a Grok4$-$Heavy-inspired strategy: multiple independent evaluations by the judge model, with the error position selected by majority voting. Using \texttt{QwQ-32B} as the judge model, \texttt{DeepSeek-R1-Distill-Qwen-7B} as the base model, and four rollouts per judgment, we observed noticeable gains in consistency and stability (see Table~\ref{tab:qwq}). Figure~\ref{fig:plot} visualizes the aggregated results under different judge modes from Tables~\ref{tab:mult} and~\ref{tab:qwq}.

\begin{wrapfigure}{r}{0.4\textwidth}
\centering
\includegraphics[width=0.4\textwidth]{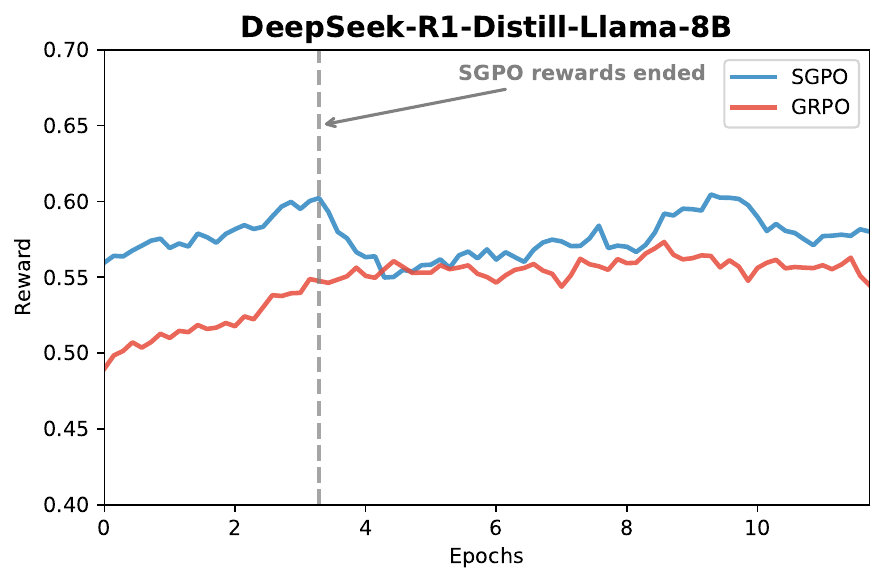} 
\caption{Evaluation results on GRPO and SGPO. SGPO rewards end at epoch 3.}\label{fig:reward} 
\vspace{-1em}
\end{wrapfigure}

Although \texttt{avg@16} measures average performance across rollouts, \texttt{pass@16} reflects the ability to solve new questions with multiple attempts. As shown in Table~\ref{tab:pass}, SGPO's gains in \texttt{pass@16} stem directly from leveraging negative samples. Learning only from solvable problems reinforces existing ability, whereas all-negative-sample groups correspond to genuinely difficult questions where GRPO consistently fails. These are precisely the cases where additional feedback can be most valuable. By providing step-level signals, SGPO rewards near-misses by reinforcing correct reasoning up to the first error, penalizes early failures by discouraging persistent error modes, and exposes blind spots by turning hard cases into informative training signals. In this regard, SGPO can provide additional benefits over GRPO, covering more hard problems and providing sharper credit assignment, which translates to more reliable learning under realistic compute budgets.

\begin{figure*}[!t]
\centering
\includegraphics[width=\textwidth]{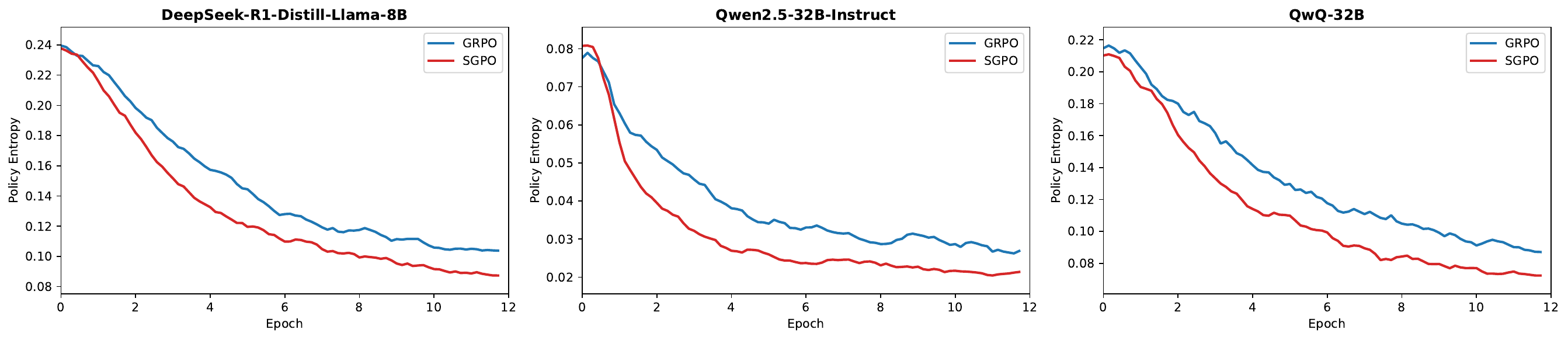}
\caption{Policy entropy levels during training for GRPO and SGPO across different base models.}\label{fig:entropy} 
\end{figure*}

By leveraging richer early-stage signals from negative samples, SGPO can achieve competitive performance with GRPO, and in some cases better performance or improved coverage on hard problems. As shown in Figure~\ref{fig:reward}, SGPO continues improving beyond epoch 5 by solving several additional hard training problems, whereas GRPO plateaus. This improvement stems from informative negative samples that help resolve previously unsolved problems as also shown in Table~\ref{tab:pass}. In line with our theoretical finding that SGPO converges faster than GRPO, empirical metrics offer supporting evidence. Prior work on RLVR entropy highlights its link to performance:~\citet{Cui-2025-Entropy} showed that lower policy entropy under correct signals correlates with stronger policies, while \citet{Agarwal-2025-Unreasonable} demonstrated that directly minimizing entropy can improve performance. As shown in Figure~\ref{fig:entropy}, SGPO reduces policy entropy more rapidly than GRPO, indicating faster convergence toward deterministic RLVR behavior with higher rollout confidence. This matches our theoretical results, confirming that step-wise signals accelerate convergence.

\subsection{Discussions}\label{subsec:discussion}
We highlight the motivation for evaluating both offline and online RL. In the offline setup, training uses \textbf{only} negative samples, allowing us to directly test whether incorrect or incomplete reasoning trajectories can improve performance. In the online setup, we simulate realistic GRPO training, where batches contain a random mix of positive and negative samples. This demonstrates that negative samples are not only effective in isolation but also remain valuable in practical settings with noisier, mixed data. While mixing positives and negatives introduces noise, simply discarding negative samples does not stabilize training; in several cases, the performance of GRPO drops below baseline, as the models overfit to problems they can solve.

This instability arises from limited out-of-domain generalization and catastrophic forgetting. Without exposure to challenging or partially correct reasoning, the model risks overfitting to easy cases, reinforcing shallow heuristics instead of developing robust problem-solving skills. The absence of diverse failure cases can also cause catastrophic forgetting, degrading performance on previously solvable tasks. Incorporating negative samples mitigates these issues, {and SGPO shows stronger robustness on the Chinese OOD math benchmarks we evaluate.} We emphasize that SGPO does not guarantee uniform improvements on every benchmark: its gains depend on the structure of negative samples. SGPO is most beneficial when failures are due to truncated but largely correct trajectories or localized mistakes, whereas when negative samples fail very early or lack meaningful structure, the step-wise signal can be less informative and gains may diminish. These observations motivate further work on more stable training frameworks, including richer reward diversification mechanisms for handling negative samples and efficient RL methods beyond GRPO.

{\paragraph{Benchmarking scope.} Our goal is to isolate the benefit of learning from all-negative-sample groups in outcome-based, verifiable-reward post-training with group-relative updates. We therefore benchmark SGPO primarily against GRPO under matched training pipelines (same base models, data, rollout budget, group size, KL control, and optimizer settings), since SGPO is designed as a drop-in modification of GRPO’s credit assignment only in all-negative groups. This choice makes the comparison controlled: improvements can be attributed to step-wise differentiation of negative samples rather than to changes in the broader RL recipe.

A natural question is how SGPO compares to methods that rely on process reward models (PRMs) or other step-level scoring mechanisms. Such comparisons would indeed further strengthen the empirical picture, but they introduce additional moving parts (PRM training data/labels, architecture, calibration, and inference overhead) that are orthogonal to our main question: given an outcome-verifiable setting and GRPO-style updates, can we recover learning signal from all-negative groups by cheaply localizing the first error? In this work, we keep the benchmark focused on GRPO and its outcome-based setting, and view PRM-based pipelines as complementary directions for future evaluation, especially when reliable PRMs and their training recipes are available and standardized. That said, we acknowledge that broader benchmarking (e.g., PRM-based pipelines) is an important next step, but given the recency and centrality of GRPO in outcome-based RLVR, a controlled GRPO-centered comparison is sufficient to establish relevance for the target setting. 
}

\section{Conclusion}\label{sec:conclu}
We propose a simple and efficient framework that introduces response diversity within all-negative-sample groups and prove, in a simplified setting, that such diversification can accelerate the learning dynamic of GRPO. { Empirically, SGPO improves average performance and improves coverage on harder problems, with the largest benefits in early and mid-training when all-negative groups are prevalent; gains are not uniform across benchmarks and depend on the structure/informativeness of negative samples.} Future works include extending theoretical results to broader multi-step reasoning tasks, applying response diversity to accelerate other RL methods, and designing lightweight, task-specific reward models that evaluate reasoning steps correctly even if they cannot solve the full problem. 

\section*{Acknowledgments}
We sincerely appreciate Buzz High Performance Computing (\hyperlink{https://www.buzzhpc.ai}{\texttt{https://www.buzzhpc.ai}}, \texttt{info@buzzhpc.ai}) for providing computational resources and support for this work.

\bibliographystyle{plainnat}
\bibliography{ref}

\newpage
\appendix
\section{Related Works}\label{app:related_work}
We comment on all related topics, including reasoning through chain-of-thought and its variants, test-time compute, direct preference alignment methods, reward models and reinforcement learning from AI feedback. For an overview of more reasoning models and methods, we refer to two recent surveys~\citep{Huang-2023-Towards, Chen-2025-Towards}. 

\paragraph{Chain-of-Thought and its variants.} Chain-of-thought (CoT) refers to as a broad class of methods that generate an intermediate reasoning process before arriving at a final answer. These approaches either prompt LLMs~\citep{Wei-2022-Chain, Khot-2023-Decomposed, Zhou-2023-Least} or train LLMs to generate reasoning chains through supervised fine-tuning (SFT)~\citep{Yue-2024-Mammoth, Yu-2024-Metamath,Li-2025-Preserving} and/or RL~\citep{Wang-2024-Math, Shao-2024-Deepseekmath, Havrilla-2024-Teaching, Yu-2025-Flow}. While CoT has proven effective for certain tasks, its auto-regressive generation nature makes it challenging to mimic human reasoning on more complex problems~\citep{Lecun-2022-Path, Hao-2023-Reasoning}, which require planning and search. Recent efforts were devoted to equipping LLMs with tree search methods~\citep{Xie-2023-Self, Yao-2023-Tree, Hao-2024-Reasoners} or training LLMs on search trajectories~\citep{Lehnert-2024-Beyond, Gandhi-2024-Stream, Su-2025-Dualformer}. Several other works have investigated why CoT is effective. For example,~\citep{Madaan-2023-Counterfactual} used a counterfactual prompting approach to examine the relative contributions of prompt elements, including symbols (digits, entities) and patterns (equations).~\citep{Feng-2023-Towards, Merrill-2024-Expressive, Li-2024-Chain} analyzed CoT from the perspective of model expressivity, and~\citep{Feng-2023-Towards} showed that employing CoT increases the effective depth of a transformer since the generated outputs are looped back to the input. This insight motivated the chain-of-continuous-thought paradigm~\citep{Hao-2024-Training}, and a related approach has been proposed in~\citep{Cheng-2024-Compressed}.

\paragraph{Reasoning through test-time compute.} OpenAI-o1~\citep{Jaech-2024-OpenAI} is among the first large-scale applications of RL to reasoning, and achieved state-of-the-art performance upon release. Following this trend, DeepSeek-R1~\citep{Guo-2025-Deepseek} is the first open-weight model to match or exceed OpenAI-o1. Their real-world success stories have involved several simple yet novel techniques that enhance LLM reasoning through more test-time compute, including chain-of-thought~\citep{Wei-2022-Chain}, self-consistency~\citep{Wang-2023-Consistency}, best-of-$N$ sampling~\citep{Snell-2025-Scaling}, process reward models~\citep{Lightman-2024-Verify}, exploration-exploitation mechanism \citep{Chen-2026-Exploration}, Monte Carlo tree search~\citep{Silver-2016-Mastering, Hao-2023-Reasoning}, tree-of-thought~\citep{Yao-2023-Tree}, and recent works on preventing overthinking~\citep{Chen-2024-Not, Team-2025-Kimi, Luo-2025-o1, Arora-2025-Training} and compressing chain-of-thought~\citep{Hao-2024-Training, Cheng-2024-Compressed}. More specifically, \textit{chain-of-thought} is a reasoning approach where intermediate steps are explicitly written to make complex problem-solving processes more transparent and logical. \textit{Self-consistency} suggests generating multiple final answers and returning the mode of an empirical distribution, enhancing test-time performance when test-time verifiers are unavailable. Unfortunately, it is computationally expensive and effective only when answers can be clustered. \textit{Best-of-$N$ sampling} resolves this issue by sampling answers from the model and selecting the best at test time according to the scoring function; however, it is sensitive to the accuracy of test-time scoring functions~\citep{Gao-2023-Scaling}. \textit{Process reward models} offer fine-grained supervision of chain-of-thought reasoning, but they might be vulnerable to reward hacking and introduce computation overhead. \textit{Monte Carlo tree search} is a generic technique that allocates computational resources toward the most promising regions of the search space, and \textit{tree-of-thought} and its extension~\citep{Besta-2024-Graph, Gandhi-2024-Stream} simplified this idea by exploring multiple reasoning paths in a specific structure, allowing language models to select the most promising line of thought for complex problem-solving. Both \textit{length regularization} and \textit{compressed chain-of-thought} are developed to reduce inference costs for reasoning, which is crucial for the economic feasibility, user experience and environmental sustainability of LLMs. In addition, several works have focused on specific reasoning tasks~\citep{Lampinen-2024-Language, Yang-2025-Emergent, Srivastava-2024-Functional, Huang-2025-Math, Huang-2024-Mirror, Guo-2025-Temporal, Gou-2024-Tora, Wang-2025-OTC, Guo-2025-Genenv}, demonstrating promising performance. The recent findings of~\citet{Xiong-2025-Minimalist} have shown that the REINFORCE-type methods (including GRPO~\citep{Shao-2024-Deepseekmath}) can not effectively learn from all-negative-sample groups. Our work alleviates this issue by leveraging AI feedback to differentiate negative samples. We also provide a theoretical analysis through a stylized model, explaining why such diversification improves GRPO’s learning dynamics. 

\paragraph{Direct preference alignment methods.} These methods (e.g., DPO~\citep{Rafailov-2023-Direct}) are simple and stable offline alternatives to online RLHF. Various DPO variants with other objectives have been proposed, including ranking ones beyond pairwise preference data~\citep{Dong-2023-RAFT, Yuan-2023-RRHF, Song-2024-Preference, Chen-2024-Noise, Liu-2025-LiPO} and simple ones that do not rely on a reference model~\citep{Hong-2024-ORPO, Meng-2024-SimPO}. Since DPO does not train a reward model, the limited size of human labels becomes a bottleneck. To alleviate this limitation, subsequent works proposed to augment preference data using a trained SFT policy~\citep{Zhao-2023-Slic} or a refined SFT policy with rejection sampling~\citep{Liu-2024-Statistical}. The DPO loss was recently rederived and extended to a token-level MDP view~\citep{Rafailov-2024-From}, where the state is the token prefix and the transition is deterministic -- which has covered the fine-tuning of LLMs -- and more general RL problems~\citep{Azar-2024-General}. There are other DPO variants~\citep{Ethayarajh-2024-Model, Park-2024-Disentangling, Xu-2024-Contrastive, Tang-2024-Generalized, Meng-2024-SimPO, Chen-2025-MallowsPO, Zhao-2025-RainbowPO, Chen-2026-Reward}. For example,~\citet{Ethayarajh-2024-Model} designed a DPO-style loss variant using a prospect theory,~\citet{Tang-2024-Generalized} optimized a general preference loss instead of the log-likelihood loss, and~\citet{Meng-2024-SimPO} aligned the reward function in the preference optimization objective with the generation metric.~\citet{Dong-2024-RLHF} and~\citet{Xiong-2024-Iterative} proposed to generate human feedback in an online fashion to mitigate the distribution-shift and over-parameterization phenomenon. This improves DPO for complex reasoning tasks~\citep{Pang-2024-Iterative}. Several other works focus on \textit{unintentional alignment} of DPO and developing new methods~\citep{Pal-2024-Smaug, Tajwar-2024-Preference, Liu-2024-Provably, Xiao-2024-Bias, Yuan-2025-Advancing, Razin-2025-Unintentional, Chen-2025-Compo}. Among these works,~\citet{Razin-2025-Unintentional} proposed to measure the similarity between preferred and dispreferred responses using the centered hidden embedding similarity (CHES) score and showed that filtering out preference pairs with small CHES score improves DPO, while~\citep{Chen-2025-Compo} proposed to use comparison oracles, and showed that combining it with DPO effectively alleviated the issue of unintentional alignment.  

\paragraph{Reward models.} For the prompt $\x$ with a ground-truth response $\y_\x^\star$, we evaluate by implementing a regular expression match on the final answer~\citep{Hendrycks-2021-Measuring}: $r(\x, \y) = 1$ if $\y$ matches $\y_\x^\star$ on the \emph{final answer} and $r(\x, \y) = 0$ otherwise. An \textit{outcome reward} model (ORM)~\citep{Cobbe-2021-Training, Uesato-2022-Solving} is trained for estimating $r(\x, \y)$. In particular, we first choose $\x \in \DCal$ and collect training samples $(\x, \y \sim \pi_\theta(\cdot | \x), r(\x, \y))$. Then, we take $(\x, \y)$ as input and train an ORM to predict $r(\x, \y)$. This can be done using binary classification~\citep{Cobbe-2021-Training, Yu-2024-OVM}, direct preference optimization~\citep{Hosseini-2024-Vstar} or next-token prediction~\citep{Zhang-2024-Generative}. Previous works also train LLMs on self-generated data using the ground-truth outcome reward model with either supervised fine-tuning~\citep{Singh-2024-Beyond, Yuan-2023-Scaling, Zelikman-2022-STaR} or online RL~\citep{Bi-2024-Deepseek, Guo-2025-Deepseek}. A \textit{process reward} model (PRM) is trained to score $a_h$ at $\s_h = (\x, a_1, \ldots, a_{h-1})$ either using human annotations~\citep{Lightman-2024-Verify} or the value functions based on LLM-generated data~\citep{Wang-2024-Math, Luo-2024-Improve, Setlur-2025-Rewarding}; indeed, PRMs estimate either the likelihood of future success or the change in the likelihood of future success before and after taking $a_h$. In addition, PRMs were also developed to improve search methods~\citep{Snell-2025-Scaling, Wu-2025-Inference}, and to identify the ``first pit" in an incorrect reasoning trajectory to construct preference pairs for direct preference alignment~\citep{Hwang-2024-Self, Setlur-2024-RL}. Related work such as VinePPO \citep{Kazemnejad-2024-Vineppo} refines process-level credit assignment in PPO without explicitly training a PRM and avoids a learned value network by using Monte Carlo rollouts from intermediate prefixes; this is complementary to our GRPO-oriented setting.

\paragraph{Reinforcement learning from AI feedback.} Reinforcement learning from human feedback (RLHF) uses human-preference-aligned reward models to evaluate response quality~\citep{Christiano-2017-Deep, Ziegler-2019-Fine, Stiennon-2020-Learning, Ouyang-2022-Training}. A key barrier to scale RLHF is the need for high-quality human labels. Previous studies~\citep{Gilardi-2023-Chatgpt, Ding-2023-GPT, Zhang-2024-Enhancing} have shown that modern LLMs exhibit strong alignment with human judgments, suggesting that AI-generated labels can serve as a viable alternative. In this context, \citep{Bai-2022-Constitutional} was the first to explore RLAIF, jointly optimizing helpfulness and harmlessness using both human and AI-generated labels, and~\citep{Roit-2023-Factually, Kwon-2023-Reward, Lee-2024-RLAIF} showed that LLMs can produce informative reward signals for RL post-training. Our approach can leverage AI feedback to introduce response diversity within all-negative-sample groups by assigning intermediate binary rewards to reasoning steps. Indeed, one identifies the proportion of correct steps in the reasoning trajectory and use it to compute a reward $r_i \in [0, 1)$.

\section{Missing Proofs}\label{app:proofs}
In this section, we present the derivations underlying the update rules in \cref{subsec:analysis}. In particular, we derive $g_{\texttt{GRPO}}(\theta)$ and $g_{\texttt{SGPO}}(\theta)$, and obtain the corresponding update forms in \cref{eq:update_minimal,eq:log-update}. 

To derive $g_{\texttt{GRPO}}(\theta)$ and $g_{\texttt{SGPO}}(\theta)$, we start by restating the score functions: 
\begin{align*}
& s(a_1=1\mid x)=
\begin{bmatrix} p\\ -p\\ 0\\ 0 \end{bmatrix},\quad
s(a_1=2\mid x)=
\begin{bmatrix} p-1\\ 1-p\\ 0\\ 0 \end{bmatrix}, \\
& s(a_2=1\mid x,a_1=2)=
\begin{bmatrix} 0\\ 0\\ q\\ -q \end{bmatrix},\quad
s(a_2=2\mid x,a_1=2)=
\begin{bmatrix} 0\\ 0\\ q-1\\ 1-q \end{bmatrix}.
\end{align*}
Let \(S(y):=\sum_{h=1}^{H} s(a_h\mid x,a_{1:h-1})\) denote the score-sum for trajectory \(y=(a_1,a_2)\).
Under the restricted trajectory space \(\{(1,1),(2,1),(2,2)\}\), we have
\begin{equation*}
S(1,1)=
\begin{bmatrix} p\\ -p\\ 0\\ 0 \end{bmatrix},\quad
S(2,1)=
\begin{bmatrix} p-1\\ 1-p\\ q\\ -q \end{bmatrix},\quad
S(2,2)=
\begin{bmatrix} p-1\\ 1-p\\ q-1\\ 1-q \end{bmatrix},
\end{equation*}
and the trajectory probabilities are
\begin{equation*}
\mathbb{P}(1,1)=1-p,\quad \mathbb{P}(2,1)=p(1-q),\quad \mathbb{P}(2,2)=pq. 
\end{equation*}
Recall $G=2$ and $H=2$. For a single prompt, our estimator averages over both samples and steps: 
\begin{equation*}
g(\theta)=\tfrac{1}{GH}\left(\sum_{k=1}^{G}\sum_{h=1}^{H} s\left(a^{(k)}_h\mid x,a^{(k)}_{1:h-1}\right) A_k\right) = \tfrac{1}{GH}\left(\sum_{k=1}^{G} S(y^{(k)})A_k\right), 
\end{equation*}
where $A_k$ is the within-group standardized advantage computed from the two rewards. For $G=2$, whenever the two rewards are distinct, standardization yields $A_{\text{high}}=+1$ and
$A_{\text{low}}=-1$; if the rewards are equal we set $A_1=A_2=0$. Thus, for any ordered pair $(y^{(1)},y^{(2)})$ with distinct rewards,
\begin{equation*}
g(\theta)=\tfrac{1}{GH}\left(S(y^{(1)})-S(y^{(2)}))\cdot \mathrm{sign}(r(y^{(1)})-r(y^{(2)})\right). 
\end{equation*}

\paragraph{GRPO.}
In GRPO, $r(2,2)=1$ and $r(1,1)=r(2,1)=0$. Thus, the only nonzero contributions come from mixed pairs $\{(2,2), (1,1)\}$ and $\{(2,2), (2,1)\}$. Using independence of the two samples and summing both orderings, 
\begin{equation*}
\bar{g}_{\mathrm{GRPO}}(\theta) =\mathbb{E}[g(\theta)] = \tfrac{2}{GH}\left(\mathbb{P}(2,2)\mathbb{P}(1,1)(S(2,2)-S(1,1)) + \mathbb{P}(2,2)\mathbb{P}(2,1)(S(2,2)-S(2,1))\right).
\end{equation*}
With $G=H=2$, $\frac{2}{GH}=\frac{1}{2}$. Substituting the probabilities and $S(\cdot)$ yields 
\begin{equation*}
\bar{g}_{\mathrm{GRPO}}(\theta)=\tfrac{1}{2}
\begin{bmatrix}
p(p-1)q \\ p(1-p)q \\ pq(q-1) \\ pq(1-q)
\end{bmatrix}.
\end{equation*}

\paragraph{SGPO.} In SGPO, $r_{\mathrm{SGPO}}(2,2)=1$, $r_{\mathrm{SGPO}}(2,1)=\tfrac{1}{2}$, and $r_{\mathrm{SGPO}}(1,1)=0$. Therefore, all three unordered mixed pairs contribute: $\{(2,2),(2,1)\}$, $\{(2,2), (1,1)\}$, and $\{(2,1), (1,1)\}$. Summing both orderings, we have
\begin{align*}
\bar{g}_{\mathrm{SGPO}}(\theta) = \tfrac{2}{GH}\left(\mathbb{P}(2,2)\mathbb{P}(2,1)(S(2,2)-S(2,1))+\mathbb{P}(2,2)\mathbb{P}(1,1)(S(2,2)-S(1,1))+\mathbb{P}(2,1)\mathbb{P}(1,1)(S(2,1)-S(1,1)))\right).
\end{align*}
Again, we have $\frac{2}{GH}=\frac{1}{2}$ for $G=H=2$, and substituting gives
\begin{equation*}
\bar{g}_{\mathrm{SGPO}}(\theta)=\tfrac{1}{2}
\begin{bmatrix}
p(p-1) \\ p(1-p) \\ p^2 q(q-1) \\ p^2 q(1-q)
\end{bmatrix}.
\end{equation*}
Now, we derive the update rules in \cref{eq:update_minimal} and the functions in \cref{eq:log-update} from the population gradients. Let $\theta^{(k+1)}=\theta^{(k)}+\eta\,\bar g(\theta^{(k)})$, where $\bar g$ is either $\bar g_{\mathrm{SGPO}}$ or $\bar g_{\mathrm{GRPO}}$. For notational brevity, define the step-1 logits $(\theta_1,\theta_2)=(\theta_{x,1}^{1},\theta_{x,2}^{1})$ and step-2 logits
$(\theta_3,\theta_4)=(\theta_{x,2,1}^{2},\theta_{x,2,2}^{2})$. In addition, we have
\begin{equation*}
p=\tfrac{e^{\theta_2}}{e^{\theta_1}+e^{\theta_2}},\quad q=\tfrac{e^{\theta_4}}{e^{\theta_3}+e^{\theta_4}}. 
\end{equation*}

\paragraph{Update of $p$.} Let \(g_1,g_2\) denote the first two coordinates of $\bar g(\theta)$. Then, we have
\begin{equation*}
p^{(k+1)} = \tfrac{e^{\theta_2^{(k)}+\eta g_2}}{e^{\theta_1^{(k)}+\eta g_1}+e^{\theta_2^{(k)}+\eta g_2}} = \tfrac{e^{\theta_2^{(k)}}e^{\eta g_2}}{e^{\theta_1^{(k)}}e^{\eta g_1}+e^{\theta_2^{(k)}}e^{\eta g_2}}. 
\end{equation*}
Using $p^{(k)}=\frac{e^{\theta_2^{(k)}}}{e^{\theta_1^{(k)}}+e^{\theta_2^{(k)}}}$ and $1-p^{(k)}=\frac{e^{\theta_1^{(k)}}}{e^{\theta_1^{(k)}}+e^{\theta_2^{(k)}}}$, we can factor out
$e^{\theta_1^{(k)}}+e^{\theta_2^{(k)}}$ to obtain
\begin{equation*}
p^{(k+1)}=\tfrac{p^{(k)}e^{\eta(g_2-g_1)}}{(1-p^{(k)})+p^{(k)}e^{\eta(g_2-g_1)}}. 
\end{equation*}
For SGPO, from $\bar g_{\mathrm{SGPO}}(\theta)$ we have
$g_1=\tfrac12 p(p-1)$ and $g_2=\tfrac12 p(1-p)$, hence $g_2-g_1=p(1-p)$. For GRPO, from $\bar g_{\mathrm{GRPO}}(\theta)$ we have $g_1=\tfrac12 p(p-1)q$ and $g_2=\tfrac12 p(1-p)q$, hence $g_2-g_1=p(1-p)q$. Setting $\eta=1$ gives the claimed update
\begin{align*}
p^{(k+1)}_{\mathrm{SGPO}} & = \tfrac{p^{(k)}_{\mathrm{SGPO}} e^{p^{(k)}_{\mathrm{SGPO}}(1-p^{(k)}_{\mathrm{SGPO}})}}{1-p^{(k)}_{\mathrm{SGPO}}+p^{(k)}_{\mathrm{SGPO}} e^{p^{(k)}_{\mathrm{SGPO}}(1-p^{(k)}_{\mathrm{SGPO}})}} = \exp(f_{11}(p^{(k)}_{\mathrm{SGPO}})), \\
p^{(k+1)}_{\mathrm{GRPO}} & = \tfrac{p^{(k)}_{\mathrm{GRPO}} e^{p^{(k)}_{\mathrm{GRPO}}(1-p^{(k)}_{\mathrm{GRPO}})q^{(k)}_{\mathrm{GRPO}}}}{1-p^{(k)}_{\mathrm{GRPO}}+p^{(k)}_{\mathrm{GRPO}} e^{p^{(k)}_{\mathrm{GRPO}}(1-p^{(k)}_{\mathrm{GRPO}})q^{(k)}_{\mathrm{GRPO}}}} = \exp(f_{21}(p^{(k)}_{\mathrm{GRPO}},q^{(k)}_{\mathrm{GRPO}})).
\end{align*}
Taking $\log$ on both sides yields 
\begin{align*}
f_{11}(p) &= \log p + p(1-p)-\log\left(1-p+pe^{p(1-p)}\right), \\
f_{21}(p,q) &= \log p + p(1-p)q-\log\left(1-p+pe^{p(1-p)q}\right). 
\end{align*}

\paragraph{Update of $q$.} An identical argument applies to $q$ using the last two coordinates $g_3,g_4$ of $\bar g(\theta)$: 
\begin{equation*}
q^{(k+1)} = \tfrac{q^{(k)}e^{\eta(g_4-g_3)}}{(1-q^{(k)})+q^{(k)}e^{\eta(g_4-g_3)}}.
\end{equation*}
For SGPO, $g_3=\tfrac12 p^2 q(q-1)$ and $g_4=\tfrac12 p^2 q(1-q)$, so $g_4-g_3=p^2 q(1-q)$. For GRPO, $g_3=\tfrac12 p q(q-1)$ and $g_4=\tfrac12 p q(1-q)$, so $g_4-g_3=p q(1-q)$. With $\eta=1$ this gives 
\begin{equation*}
q^{(k+1)}_{\mathrm{SGPO}}=\exp(f_{12}(p^{(k)}_{\mathrm{SGPO}},q^{(k)}_{\mathrm{SGPO}})),\quad q^{(k+1)}_{\mathrm{GRPO}}=\exp(f_{22}(p^{(k)}_{\mathrm{GRPO}},q^{(k)}_{\mathrm{GRPO}})),
\end{equation*}
where by taking $\log$ on both sides, we have 
\begin{align*}
f_{12}(p,q) &= \log q + p^2 q(1-q)-\log\left(1-q+qe^{p^2 q(1-q)}\right), \\
f_{22}(p,q) &= \log q + p q(1-q)-\log\left(1-q+qe^{p q(1-q)}\right). 
\end{align*}

We then provide several technical lemmas that are important to the proof of \cref{thm:main}. Indeed, the first lemma summarizes the properties of particular functions related to the aforementioned functions $f_{11}$, $f_{21}$, $f_{12}$ and $f_{22}$ from~\cref{eq:log-update}. 
\begin{lemma}\label{lem:monotone}
The following statements hold true, 
\begin{enumerate}[label=(\roman*), nosep, leftmargin=*, wide, ref=\thelemma(\roman*)]
\item \label{lem:monotone-1} The function $f_{11}$ is strictly increasing on $(0,1)$.
\item \label{lem:monotone-2} The function $h_p(x) := x - \log(1-p+pe^x)$ is strictly increasing for any fixed $p\in(0,1)$.
\item \label{lem:monotone-3} The function $f_{21}$ is strictly increasing in either $p$ for any fixed $q$ or $q$ for any fixed $p$ on $(0,1)$. 
\item \label{lem:monotone-4} The function $\varphi(x):=\log (1+e^{-e^x})$ is strictly concave on $(-\infty,0)$. 
\end{enumerate}
\end{lemma}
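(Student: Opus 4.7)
The plan is to handle the four claims by direct differentiation, exploiting two structural observations that collapse the work: $f_{11}(p)=f_{21}(p,1)$, so (i) is an immediate corollary of the $p$-direction of (iii); and $f_{21}(p,q)=\log p + h_p\bigl(p(1-p)q\bigr)$, which pipelines (ii) into the $q$-direction of (iii). With these in place, only the $p$-direction of (iii) and part (iv) require genuine work.

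I would begin with (ii): a single differentiation gives $h_p'(x)=(1-p)/(1-p+pe^x)>0$ for any $p\in(0,1)$. The $q$-direction of (iii) is then immediate, since for fixed $p\in(0,1)$ the argument $p(1-p)q$ is strictly increasing in $q$, composing with the strictly increasing $h_p$ preserves this, and $\log p$ is a $q$-free constant.

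For the $p$-direction of (iii), a direct partial derivative in $p$ is tangled because $p$ appears in three places. My approach is to exponentiate and rewrite
\begin{equation*}
e^{f_{21}(p,q)} \;=\; \frac{1}{1+g(p)}, \qquad g(p) \;:=\; \frac{1-p}{p}\,e^{-p(1-p)q},
\end{equation*}
so it suffices to show $g$ is strictly decreasing on $(0,1)$. Taking logarithms reduces this to checking $(\log g)'(p)=-1/\bigl(p(1-p)\bigr)-(1-2p)q<0$ for $q\in(0,1)$. Both terms are nonpositive on $(0,1/2]$, and on $(1/2,1)$ the crude bound $1/(p(1-p))\ge 4 > (2p-1)q$ closes the case. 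Specializing to $q=1$ gives (i). For (iv), I would write $\varphi=\log(1+\psi)$ with $\psi(x):=\tfrac{1}{2}e^{-e^x}$, compute $\psi''(x)=\tfrac{1}{2}e^{x}e^{-e^x}(e^x-1)$, which is strictly negative on $(-\infty,0)$, and read off $\varphi''=\bigl(\psi''(1+\psi)-(\psi')^2\bigr)/(1+\psi)^2<0$ since both numerator terms are nonpositive with $-(\psi')^2$ strictly so.

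The main obstacle is the $p$-direction of (iii): the cross-dependencies in $f_{21}$ make a direct derivative opaque, and the reparametrization that isolates $g(p)$ is what turns a sign check into an elementary inequality. It is worth flagging that the estimate in the $(1/2,1)$ case tacitly uses $q\le 1$, which is precisely the regime relevant to the policy iterates in \cref{thm:main}; outside this range the split at $p=1/2$ would need a different bound.
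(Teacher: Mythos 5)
Your proposal is correct, and your route for part (i) and the $p$-direction of part (iii) is genuinely different from (and arguably cleaner than) the paper's.

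The paper proves (i) and (iii) by brute-force differentiation: it writes out $f_{11}'(p)$ and the two partials $\partial f_{21}/\partial p$, $\partial f_{21}/\partial q$ as explicit rational expressions and then lower-bounds the numerators. Your approach instead passes to $e^{f_{21}(p,q)} = 1/(1+g(p))$ with $g(p) = \frac{1-p}{p}e^{-p(1-p)q}$, takes logs, and reduces the $p$-monotonicity to the one-line sign check $(\log g)'(p) = -\tfrac{1}{p(1-p)} - (1-2p)q < 0$, with (i) falling out as the $q=1$ specialization. This sidesteps the tangled derivative of the $-\log(1-p+pe^{p(1-p)q})$ term entirely; the price is the case split at $p = 1/2$, which is harmless given $q \le 1$, as you correctly flag. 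For the $q$-direction, your observation that $f_{21}(p,q) = \log p + h_p(p(1-p)q)$ and composition with the increasing $h_p$ handles it is the same content as the paper's partial derivative $\partial f_{21}/\partial q = p(1-p)^2/(1-p+pe^{p(1-p)q}) > 0$, just presented structurally rather than computed; this framing is in fact the one the paper later exploits in its proof of the dynamics lemma, so it is a natural unification. Part (ii) is identical. Part (iv) is the same differentiation in slightly different bookkeeping: you split $\varphi = \log(1+\psi)$ and show $\psi''(1+\psi) - (\psi')^2 < 0$ term by term, whereas the paper computes $\varphi''$ in one shot and bounds the numerator; both are fine, and your version makes the sign argument more transparent. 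Net: you prove the same four facts, but where the paper computes three rational-function derivatives and bounds them, you prove one elementary inequality and derive everything else by composition and specialization.
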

\begin{proof}
First of all, we have
\begin{equation*}
f'_{11}(p) = \tfrac{1+(1-2p)p(1-p)}{p(1-p+pe^{p(1-p)})} > \tfrac{3}{4p(1-p+pe^{p(1-p)})} > 0. 
\end{equation*}
Thus, the function $f_{11}$ is strictly increasing on $(0,1)$. 

Furthermore, we have
\begin{equation*}
h'_p(x) = 1 - \tfrac{pe^x}{1-p+pe^x} = \tfrac{1-p}{1-p+pe^x} \overset{0<p<1}{>} 0. 
\end{equation*}
Thus, the function $h_p(x)$ is strictly increasing. 

Moreover, we have 
\begin{eqnarray*}
\tfrac{\partial f_{21}(p,q)}{\partial p} & = & \tfrac{1+q(1-2p)p(1-p)}{p(1-p+pe^{qp(1-p)})} \ > \ \tfrac{3}{4p(1-p+pe^{p(1-p)})} > 0, \\
\tfrac{\partial f_{21}(p,q)}{\partial q} & = & \tfrac{p(1-p)^2}{1-p+pe^{qp(1-p)}} \ > \ 0. 
\end{eqnarray*}
Thus, the function $f_{21}$ is strictly increasing in either $p$ for any fixed $q$ or $q$ for any fixed $p$ on $(0,1)$. 

Finally, we have
\begin{equation*}
\varphi''(x) = \tfrac{(e^{x + e^x} - e^{e^x} - 1)e^x}{e^{2e^x} + 2e^{e^x} + 1}.
\end{equation*}
Since $u = e^x \in (0,1)$ for $x < 0$, we have
\begin{equation*}
(ue^u - e^u - 1)u = (e^u(u-1)- 1)u < -u < 0.  
\end{equation*}
Thus, $\varphi''(x)<0$ for all $x < 0$ which shows that $\varphi$ is strictly concave on $(-\infty, 0)$. 
\end{proof}

The second lemma presents an inequality which plays a key role in the proof of \cref{thm:main}. 

\begin{lemma}\label{lem:keyABC}
We define the auxiliary functions as follows,  
\begin{equation*}
\begin{gathered}
A(x) = 1+\left(\tfrac{1}{x}-1\right)e^{-x(1-x)}, \; B(x,y) = 1+\left(\tfrac{1}{y}-1\right)e^{-x^2y(1-y)}, \\
C(z) = 1+\left(\tfrac{1}{z}-1\right)e^{-z^2(1-z)}. 
\end{gathered}
\end{equation*}
Then, we have $C(\sqrt{xy})^2 > A(x)B(x,y)$ for all $x$ and $y$ satisfying $1/2<y<x<1$. 
\end{lemma}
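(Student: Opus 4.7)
The plan is to reduce $C(x,y)^2 > A(x)B(x,y)$ to an explicit Taylor computation in the small parameters $u := 1-x$ and $v := 1-y$, which under the hypothesis $31/32 < y < x < 1$ satisfy $0 < u < v < 1/32$. Introducing the auxiliary variable $w := 1 - \sqrt{xy} = 1 - \sqrt{(1-u)(1-v)}$, one writes
\begin{equation*}
A(x) = 1 + \tfrac{u}{1-u}\, e^{-u(1-u)}, \quad C(x,y) = 1 + \tfrac{w}{1-w}\, e^{-w(1-w)^2}, \quad B(x,y) = 1 + \tfrac{v}{1-v}\, e^{-v(1-v)(1-u)^2},
\end{equation*}
so $A$ and $C$ share an identical functional form in the variables $u$ and $w$, respectively, while $B$ carries a slightly asymmetric exponent. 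Since $\sqrt{1-t} = 1 - t/2 + O(t^2)$ with $t = u + v - uv$, the key geometric fact is that $w \approx (u+v)/2$ to leading order, so $w$ behaves like a mean of $u$ and $v$.

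Next I would apply \cref{lem:Taylor} systematically: \cref{lem:Taylor-1} provides two-sided polynomial bounds on $\tfrac{u}{1-u}$, $\tfrac{v}{1-v}$, $\tfrac{w}{1-w}$; \cref{lem:Taylor-2} bounds the three exponential factors; and \cref{lem:Taylor-3} controls $w$ in terms of $u, v$ via the expansion of $\sqrt{1-(u+v-uv)}$. These combine to give explicit fifth-order polynomial enclosures of $A$, $B$, and $C$ valid uniformly for $u, v \in (0, 1/32)$, from which an enclosure of $C^2 - AB$ follows. The leading contribution is
\begin{equation*}
\bigl(1 + \tfrac{u+v}{2}\bigr)^2 - (1+u)(1+v) \;=\; \tfrac{(u-v)^2}{4},
\end{equation*}
which is positive whenever $u \neq v$.

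The principal obstacle is that this leading $(u-v)^2/4$ contribution degenerates as $u \to v$, so the near-symmetric regime demands a separate positive contribution at higher order. In the exact symmetric limit $u = v$, one has $w = u$ and $C^2 - AB$ reduces to $\rho[2e^{-c} - e^{-a} - e^{-b}] + \rho^2[e^{-2c} - e^{-(a+b)}]$ with $\rho := \tfrac{u}{1-u}$, $a := u(1-u)$, $b := u(1-u)^3$, $c := u(1-u)^2 = \sqrt{ab}$. The second bracket is nonnegative by AM-GM ($2c \leq a+b$, equivalent to $((1-u)-1)^2 \geq 0$), and a direct Taylor expansion shows the first bracket equals $u^3(1-u) + O(u^4) > 0$ for $u < 1/32$, yielding an overall positive contribution of order $u^4$. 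A case split between a generic regime $|u-v| \gtrsim u^2$ (where the leading $(u-v)^2/4$ dominates) and a near-symmetric regime $|u-v| \lesssim u^2$ (where the quartic-in-$u$ term from the symmetric limit dominates), combined with uniform residual control from \cref{lem:Taylor} for $u, v < 1/32$, completes the argument. The numerical threshold $31/32$ is calibrated precisely so that in both regimes the Taylor residuals remain strictly below the positive contributions.
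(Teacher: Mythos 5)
Your high-level strategy matches the paper's: pass to the small parameters $u=1-x$, $v=1-y$, $w=1-\sqrt{xy}$ and compare explicit fifth-order polynomial enclosures of $A$, $B$, $C$ via \cref{lem:Taylor}. (The paper writes this as $h=1-y$, $\varepsilon=(x-y)/(1-y)$, so $1-x=(1-\varepsilon)h$.) Two corrections before the main issue: $A$ and $C$ do \emph{not} have an identical functional form — the exponents are $-u(1-u)$ versus $-w(1-w)^2$ — and the leading term of $C^2-AB$ is $\tfrac{(u-v)^2}{2}$, not $\tfrac{(u-v)^2}{4}$. Your computation $\bigl(1+\tfrac{u+v}{2}\bigr)^2-(1+u)(1+v)$ drops two quadratic contributions: the $w^2$ in $C^2=1+2w+w^2+\cdots$ and the second-order part of $w=\tfrac{u+v}{2}-\tfrac{uv}{2}+\tfrac{(u+v)^2}{8}+\cdots$; restoring them gives $\tfrac{(u+v)^2}{2}-2uv=\tfrac{(u-v)^2}{2}$. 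Your AM-GM decomposition at the symmetric limit $u=v$ is correct and a nice observation.

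The genuine gap is in the case split. At your threshold $|u-v|\sim u^2$ the leading term is of order $u^4$, whereas a \emph{generic} cubic residual in $C^2-AB$ would be of order $u^3$ and would overwhelm it; indeed, if that residual did not vanish as $u\to v$, no threshold of the form $|u-v|\gtrsim u^\alpha$ could work, since the near-symmetric side only supplies an $O(u^4)$ positive contribution. What rescues the argument — and what your proposal never articulates or verifies — is a nontrivial cancellation at cubic order: the $h^3$-coefficients of $A(x)B(x,y)$ and $C(x,y)^2$ agree exactly as $\varepsilon\to 0$ (both equal $5$ in the paper's bounds, Eqs.~\eqref{eq:ABubound} and~\eqref{eq:C^2bound}), so the cubic term in $C^2-AB$ is $\bigl(-\varepsilon-\tfrac{1}{4}\varepsilon^2+\tfrac{5}{8}\varepsilon^3\bigr)h^3=O(|u-v|\,h^2)$, not $O(h^3)$. \cref{lem:Taylor} gives coefficients of the right size but does not by itself establish this cancellation; one must actually multiply out matching enclosures of $AB$ and $C^2$ and observe the cubic terms collapse. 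The paper also avoids your case split entirely: after extracting the cancellation it writes $C^2-AB\geq h^2\bigl(\tfrac12\varepsilon^2-(\varepsilon+\tfrac14\varepsilon^2)h+(\tfrac{17}{32}-\varepsilon^2)h^2\bigr)$ and closes with a single negative-discriminant check on a quadratic form in $(\varepsilon,h)$, which handles the generic and near-symmetric regimes with one inequality. So your plan \emph{can} be repaired (identify the $\varepsilon$-cancellation, then case-split with care on constants), but as written it would fail at the crossover, and the unified discriminant route is both necessary in spirit and cleaner in execution.
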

\begin{proof}
We consider the lower and upper bound of $e^{-u}$ when $u>0$: 
\begin{equation*}
1 - u + \tfrac{u^2}{2} - \tfrac{u^3}{6} < e^{-u} < 1 - u + \tfrac{u^2}{2}.
\end{equation*}
Since $1/x-1$, $1/y-1$ and $1/\sqrt{xy}-1$ are all positive, we have 
\begin{equation*}
A(x) \leq 1+\tfrac{1-x}{x}\left(1-x(1-x)+\tfrac{x^2(1-x)^2}{2}\right) = \tfrac{1}{x}-(1-x)^2+\tfrac{x(1-x)^3}{2}. 
\end{equation*}
\begin{equation*}
B(x,y) \leq 1+\tfrac{1-y}{y}\left(1-x^2y(1-y)+\tfrac{x^4y^2(1-y)^2}{2}\right) = \tfrac{1}{y}-x^2(1-y)^2+\tfrac{x^4y(1-y)^3}{2}. 
\end{equation*}
\begin{equation*}
\begin{array}{rcl}
C(z) & \geq & 1+\tfrac{1-z}{z}\left(1-z^2(1-z)+\tfrac{z^4(1-z)^2}{2}-\tfrac{z^6(1-z)^3}{6}\right) \\
& = & \tfrac{1}{z}-z(1-z)^2+\tfrac{z^3(1-z)^3}{2}-\tfrac{z^5(1-z)^4}{6}. 
\end{array}
\end{equation*}
Set $z^2=xy$, the original statement is equivalent to $(zC(z))^2>(xA(x))(yB(x,y))$. Using the above upper and lower bound, it suffices to show $C_1(\sqrt{xy})^2>A_1(x)B_1(x,y)$ where 
\begin{equation*}
\begin{gathered}
A_1(x) = 1-x(1-x)^2+x^2(1-x)^3/2, \; B_1(x,y) = 1-x^2y(1-y)^2+x^4y^2(1-y)^3/2, \\
C_1(z) = 1-z^2(1-z)^2+z^4(1-z)^3/2-z^6(1-z)^4/6. 
\end{gathered}
\end{equation*}
By \cref{lem:keyABC2}, this is indeed true. This completes the proof. 
\end{proof}

\begin{lemma}\label{lem:keyABC2}
Define functions 
\begin{equation*}
\begin{gathered}
A_1(x) = 1-x(1-x)^2+x^2(1-x)^3/2, \; B_1(x,y) = 1-x^2y(1-y)^2+x^4y^2(1-y)^3/2, \\ C_1(z) = 1-z^2(1-z)^2+z^4(1-z)^3/2-z^6(1-z)^4/6. 
\end{gathered}
\end{equation*}
Then, $C_1(\sqrt{xy})^2>A_1(x)B_1(x,y)$ for all $1/2<y<x<1$. 
\end{lemma}
\begin{proof}
Let $x=u^2$ and $y=v^2$, then $1>u>v>1/\sqrt{2}$ and $z=uv$. We next show the desired inequality holds on a larger region, i.e., $1>u>v>2/3$. On this larger region, we have the reparameterization as follows:
\begin{equation*}
u = \tfrac{2s+3}{3s+3}, \quad v = \tfrac{2r+2s+3}{3r+3s+3}, \quad s,r\in(0,+\infty), 
\end{equation*}
or equivalently,
\begin{equation*}
s = \tfrac{3(1-u)}{3u-2}, \quad r = \tfrac{3(u-v)}{(3u-2)(3v-2)}, \quad 1>u>v>\tfrac{2}{3}. 
\end{equation*}
It is easy to see this defines a one-to-one correspondence from $(u,v)$-space to $(s,r)$-space. Thus, we aim to prove the following function $f$ is positive: 
\begin{equation*}
F(s,r):=C_1\left(\tfrac{2s+3}{3s+3}\cdot\tfrac{2r+2s+3}{3r+3s+3}\right)^2-A_1\left(\left(\tfrac{2s+3}{3s+3}\right)^2\right)B_1\left(\left(\tfrac{2s+3}{3s+3}\right)^2,\left(\tfrac{2r+2s+3}{3r+3s+3}\right)^2\right). 
\end{equation*}
By leveraging Sympy’s symbolic engine, the function expands and simplifies to: 
\begin{equation*}
F(s,r) = \tfrac{f(s,r)}{c(s+1)^{20}(r+s+1)^{20}}, \text{ where } f(s,r):=\sum_{k=0}^{20}c_{20-k}(s)r^{20-k}, 
\end{equation*}
where $c>0$ is a universal constant, and single-variable polynomials $c_{20}(s),\ldots,c_2(s),c_0(s) > 0$ and $\Delta_2:=c_1(s)^2-4c_2(s)c_0(s)<0$, for all $s>0$ (see \cref{tab:coefflist} for details). Notice that from the table, we can see the only nontrivial parts are $c_3(s)>0$ and $c_2(s)>0$ because only these two contain negative coefficients. The positivity of $c_3(s)$ is simple because there is only one term ($s^9$) with negative coefficient and for all $s>0$, 
\begin{equation*}
19471456710454363005152664 s^{10} + 9684588377731643071927236 s^{8} > 14413823109350224541499726 s^{9}. 
\end{equation*}
To see this, simple estimation and AM-GM inequality yield
\begin{equation*}
\textnormal{LHS} > 1.9\times 10^{25}s^{10} + 9.6\times 10^{24}s^{8} > 2\sqrt{182.4}\times 10^{24}s^9 > 2.7\times 10^{25}s^9 > \textnormal{RHS}. 
\end{equation*}
The positivity of $c_2(s)$ is more complicated because it has $4$ negative terms $s^{10},s^9,s^8,s^7$. However, we can use similar idea, i.e., choosing a pair of positive terms to bound a negative term:
\begin{align*}
& 95791062786555508724088742320 s^{15} + 571809550541807937530952 s^{5} > 70273595236432368329707716 s^{10}; \\
& 43785862330162499052209529768 s^{14} + 184789343789534461150530 s^{4} > 99854072704322871537392604 s^{9}; \\
& 16326736853527122991715155824 s^{13} + 35488375569622472169240 s^{3} > 35726031377969792088188925 s^{8}; \\
& 4608219050084326790748933153 s^{12} + 4362950858813170449228 s^{2} > 6099037895307670142287608 s^{7}. 
\end{align*}
To see this, simple estimation and AM-GM inequality yield
\begin{align*}
& \textnormal{LHS} > 9.5\times 10^{28}s^{15} + 5.7\times 10^{23}s^{5} > 2\sqrt{541.5}\times 10^{25}s^{10} > 4.6\times 10^{26}s^{10} > \textnormal{RHS}; \\
& \textnormal{LHS} > 4.3\times 10^{28}s^{14} + 1.8\times 10^{23}s^{4} > 2\sqrt{77.4}\times 10^{25}s^{9} > 1.7\times 10^{26}s^{10} > \textnormal{RHS}; \\
& \textnormal{LHS} > 1.6\times 10^{28}s^{13} + 3.5\times 10^{22}s^{3} > 2\sqrt{5.6}\times 10^{25}s^{9} > 4.7\times 10^{25}s^{8} > \textnormal{RHS}; \\
& \textnormal{LHS} > 4.6\times 10^{27}s^{12} + 4.3\times 10^{21}s^{2} > 2\sqrt{19.7}\times 10^{24}s^{7} > 8.8\times 10^{24}s^{7} > \textnormal{RHS}. 
\end{align*}
In conclusion, we have all coefficient $c_i(s)$ positive except $c_1(s)$, but it doesn't affect the positivity of $f$ because $\Delta_2<0$. This completes the proof. 
\end{proof}
\begin{xltabular}{\textwidth}{|@{}c@{}|X|} 
\caption{Coefficient Lists of $F(s,r)$} \label{tab:coefflist} \\
\hline
Notation & Value \\ \hline
$c$ & $437675956526049436836$ \\ \hline 
$c_{20}$ & $2(2s+3)^{2}(1714774320744848750 s^{18} + 26610409260691576200 s^{17} + 191778746468802317181 s^{16} + 850194149855082319224 s^{15} + 2587082434290045806049 s^{14} + 5704415906039160731874 s^{13} + 9366197581963232054460 s^{12} + 11563054951307567026248 s^{11} + 10670965452123886149660 s^{10} + 7187176769582075261292 s^{9} + 3372972168996579430017 s^{8} + 1072082836158220703952 s^{7} + 370302094042890771285 s^{6} + 329901677376902425818 s^{5} + 282799986805616267862 s^{4} + 151168270170893365008 s^{3} + 49139849518345513368 s^{2} + 9090603727935062976 s + 742484948385838248)$ \\ \hline 
$c_{19}$ & $8(2 s + 3)^2(8573871603724243750 s^{19} + 141183751848798840450 s^{18} + 1085065457535611084097 s^{17} + 5159905424662678527663 s^{16} + 16962017821014041355285 s^{15} + 40761515892906930261393 s^{14} + 73777358861762677983126 s^{13} + 101968163476291942277643 s^{12} + 107719550183279202945336 s^{11} + 85951871005332247942347 s^{10} + 50477142420763872747039 s^{9} + 21228461710484227270812 s^{8} + 7175576253360286202193 s^{7} + 3821642119447908810138 s^{6} + 3343140602539615070982 s^{5} + 2308058741380310946144 s^{4} + 1046528691565621471344 s^{3} + 300769285860744146028 s^{2} + 50403014643440592936 s + 3785769852305984190)$ \\ \hline 
$c_{18}$ & $2(2s+3)^2(325807120941521262500 s^{20} + 5673987380977312396200 s^{19} + 46320143614200183575358 s^{18} + 235164624868455434314740 s^{17} + 830330782346499878631402 s^{16} + 2159105892766696625508432 s^{15} + 4268066364777710628878112 s^{14} + 6521177726276586191628264 s^{13} + 7742933419720025359660131 s^{12} + 7110911361873582109051992 s^{11} + 4976474876383070663195517 s^{10} + 2600230719135591148269222 s^{9} + 1035307928116150386109695 s^{8} + 420495220158165783672300 s^{7} + 287436345862168026209421 s^{6} + 225749912117527047120354 s^{5} + 132205553924765757023286 s^{4} + 52546262098747895532864 s^{3} + 13596497258861930544108 s^{2} + 2087305777245729888936 s + 145293329180967197454)$ \\ \hline 
$c_{17}$ & $12(2s+3)^{2}(325807120941521262500 s^{21} + 5982992191700268855300 s^{20} + 51700930512613327403406 s^{19} + 279079780777259477944590 s^{18} + 1053187317945045364767966 s^{17} + 2945458902809849586876810 s^{16} + 6310900331865363012743844 s^{15} + 10554182290179327214273482 s^{14} + 13894164004300292404029789 s^{13} + 14397511755502695216399777 s^{12} + 11648187532971253859583195 s^{11} + 7253672843249894875203621 s^{10} + 3463934076062711984148183 s^{9} + 1401311871124636922510679 s^{8} + 709635073453803384330663 s^{7} + 526601300781722718969621 s^{6} + 366411462920903557014120 s^{5} + 187790938366917450633606 s^{4} + 66828265788979238418684 s^{3} + 15774993964318985462592 s^{2} + 2238177282159012945966 s + 145311275743961970078)$ \\ \hline 
$c_{16}$ & $3(2s+3)^{2}(5538721056005861462500 s^{22} + 106963949041194830344800 s^{21} + 975372417387075095557110 s^{20} + 5577701869567635624516312 s^{19} + 22401070138854784562671602 s^{18} + 67034725561809321462257220 s^{17} + 154689592660061775780683034 s^{16} + 280889539801332767094091608 s^{15} + 405664328993005936098158220 s^{14} + 467432234597450323377654624 s^{13} + 428162911701836470453488816 s^{12} + 308851193177419859648120664 s^{11} + 173923711507624595412555792 s^{10} + 78611043902295277305014364 s^{9} + 34502723932108659968068191 s^{8} + 20784869678087847011350512 s^{7} + 15211070491275270899260479 s^{6} + 9439795169478817720557390 s^{5} + 4323642777299210867466840 s^{4} + 1398336255225225668686176 s^{3} + 304165103383419145366680 s^{2} + 40170170984468888396880 s + 2445688799534592091926)$ \\ \hline
$c_{15}$ & $12(2s+3)^{2}(4430976844804689170000 s^{23} + 89773624658788072119600 s^{22} + 861449789967478967902968 s^{21} + 5202052340570363961799272 s^{20} + 22150958674722147636887880 s^{19} + 70610768977431597138502416 s^{18} + 174547747719038741579249148 s^{17} + 341846798830220759744006802 s^{16} + 537017897494050473220887475 s^{15} + 680386317088643909072652357 s^{14} + 694900917445527709102606203 s^{13} + 568854294351452352559155357 s^{12} + 370162500862325208186949407 s^{11} + 192089934615274810143113637 s^{10} + 85560000374404871078044743 s^{9} + 42188979069325595690484894 s^{8} + 27986482983635448916149477 s^{7} + 19349473000527948423160098 s^{6} + 10826549522417650582347903 s^{5} + 4499119009419911850147903 s^{4} + 1337268081929646109116429 s^{3} + 270188727447397870150299 s^{2} + 33412191448722202871793 s + 1916481339467789227047)$ \\ \hline 
$c_{14}$ & $3(2s+3)^{2}(44309768448046891700000 s^{24} + 939760900846202799633600 s^{23} + 9465882231449979270581616 s^{22} + 60189415644287265430240224 s^{21} + 270831851136366961169859120 s^{20} + 916082243422713193340650464 s^{19} + 2414605718335618880853970032 s^{18} + 5071784805050410035823167576 s^{17} + 8606003351786628019139811024 s^{16} + 11882072721559268002578594336 s^{15} + 13373135736066588915267225192 s^{14} + 12233789753017327381398656592 s^{13} + 9038087243602138768195078704 s^{12} + 5369286552690873446276117328 s^{11} + 2623535852573549267091555300 s^{10} + 1196071780982939651865144096 s^{9} + 660078244623496780263588075 s^{8} + 451393195997666208667458852 s^{7} + 290343677073268941864046305 s^{6} + 148042042832629803967321050 s^{5} + 56464193331043404116741784 s^{4} + 15559478430192850829818824 s^{3} + 2939179015494775847121192 s^{2} + 342046929585497061253176 s + 18557914646800278459054)$ \\ \hline
$c_{13}$ & $6(2s+3)^{2}(44309768448046891700000 s^{25} + 981785555104524878071200 s^{24} + 10357132334422169539112688 s^{23} + 69166131783384274997062320 s^{22} + 327906609514495942625889552 s^{21} + 1172872936442019296825004672 s^{20} + 3283063721629310545911589320 s^{19} + 7360333239220785966714322212 s^{18} + 13411238091959519801173855314 s^{17} + 20030874331258639447616405430 s^{16} + 24612632066584435063045693896 s^{15} + 24861807943243247848564702728 s^{14} + 20554616309938676564088193632 s^{13} + 13828701774644445607198489296 s^{12} + 7593307018019776947591316692 s^{11} + 3573164437682539651140298914 s^{10} + 1711205400898634741432588709 s^{9} + 1026436201325482873227731181 s^{8} + 693346698158130213351442587 s^{7} + 413729468327452341092783823 s^{6} + 194077643830831926535960674 s^{5} + 68561905220231775619051080 s^{4} + 17640970508803332546397944 s^{3} + 3132633769453198732801032 s^{2} + 344560424227000935565860 s + 17745096925489168423620)$ \\ \hline 
$c_{12}$ & $3(2s+3)^{2}(144006747456152398025000 s^{26} + 3327383180429252608653600 s^{25} + 36686806677921921575024412 s^{24} + 256712663785782687868607040 s^{23} + 1278878488443918869718977532 s^{22} + 4822501965635852078399708760 s^{21} + 14284937167605407625123613032 s^{20} + 34039675734020531218713639960 s^{19} + 66269194087347804842641890936 s^{18} + 106420251181999059483739591464 s^{17} + 141673718347886548611527896944 s^{16} + 156512253734744849831503592064 s^{15} + 143119458698672790284020359156 s^{14} + 107772919457957006535562903236 s^{13} + 66583563316818931905117287625 s^{12} + 34208027977787101480006575072 s^{11} + 15821430510294339891416781741 s^{10} + 8030913522092664743057080482 s^{9} + 5050521785734143734257145460 s^{8} + 3292374301511579644939102872 s^{7} + 1827212365211212115171329068 s^{6} + 795146625577647417589837740 s^{5} + 262142182103903879108812389 s^{4} + 63354427214296180937779584 s^{3} + 10625610419721898094320272 s^{2} + 1108770079900593722922360 s + 54371622599418650521707)$ \\ \hline 
$c_{11}$ & $2(2s+3)^{2} (288013494912304796050000 s^{27} + 6927926613537598727151600 s^{26} + 79684994050724088696229560 s^{25} + 583013575348670021732718984 s^{24} + 3044719667796555717818159544 s^{23} + 12071152881690852686028406920 s^{22} + 37719751802283064561860228312 s^{21} + 95186570754684978954315680724 s^{20} + 197141443250509137121100298018 s^{19} + 338621398294617306936300888486 s^{18} + 485320394172125494078362290142 s^{17} + 581787906451611153835763035926 s^{16} + 582802786083324408675746802192 s^{15} + 485993832849417808677870096480 s^{14} + 335594444689992516446373541470 s^{13} + 191818678234951313275318187166 s^{12} + 93215508141687941579846043591 s^{11} + 42995488556619072965421127845 s^{10} + 22993334205445704498638551659 s^{9} + 14702002876383491619167125293 s^{8} + 9151802478157350485470780638 s^{7} + 4746495171599998113140409294 s^{6} + 1930238883493800552821467836 s^{5} + 597661705826550277052178582 s^{4} + 136369421095118152409287875 s^{3} + 21690051407678516824173015 s^{2} + 2154400447443907595487183 s + 100873131758694046028745)$ \\ \hline 
$c_{10}$ & $(2s+3)^{2}(633629688807070551310000 s^{28} + 15842391105676722921391200 s^{27} + 189762104968446645014104296 s^{26} + 1448899233419905865696230704 s^{25} + 7915017749037631812296989272 s^{24} + 32911374092963881171128851808 s^{23} + 108184113680419836957062571120 s^{22} + 288181457405913203915560147656 s^{21} + 632563077547125768289724175852 s^{20} + 1156966248842167228077853758672 s^{19} + 1775610230389152091840398326292 s^{18} + 2294661769917892997575576903488 s^{17} + 2498192965867701162268688835924 s^{16} + 2285659112305489254434518711416 s^{15} + 1748991971403770816761602941934 s^{14} + 1113797684163785846264967323376 s^{13} + 592491395159743574390096742111 s^{12} + 274476600715541565615011213796 s^{11} + 127044619585271350650188845452 s^{10} + 70538582326147723865407765680 s^{9} + 44889681102540017067072498465 s^{8} + 26602898887160759567836334520 s^{7} + 12967769746561749479001701280 s^{6} + 4960789745148078555411299844 s^{5} + 1450732098731590049595423084 s^{4} + 313930894468329082145284956 s^{3} + 47525670620612611058618019 s^{2} + 4506795195444098216749962 s + 201981066438920088074121)$ \\ \hline 
$c_9$ & $2(2s+3)^{2}(288013494912304796050000 s^{29} + 7474247118895785746840400 s^{28} + 93085137978826092989684184 s^{27} + 740402244183890613610238616 s^{26} + 4222492016436341447317422840 s^{25} + 18373474587737757143576597976 s^{24} + 63374311264983526695777429384 s^{23} + 177689296198984770408646605492 s^{22} + 411992129847683864453855977890 s^{21} + 799270570087512158479167778014 s^{20} + 1307454563265970530661662310488 s^{19} + 1811440906673507850196226957292 s^{18} + 2129008833631551803404401966618 s^{17} + 2120318902997465121470552812398 s^{16} + 1782752578142774040882126045846 s^{15} + 1258348636755929955587480574282 s^{14} + 742222950605778500648135120808 s^{13} + 368759532677717760323871620448 s^{12} + 163292340177616665345297878034 s^{11} + 75628135544208165254280371040 s^{10} + 42858735954613893102489442569 s^{9} + 26815609860493555439017106967 s^{8} + 15154066307595953401565854680 s^{7} + 6986502898234763627448498006 s^{6} + 2529774938694747662152377363 s^{5} + 702330501810674770487862723 s^{4} + 144732655027863948827862261 s^{3} + 20924547208371262485560295 s^{2} + 1899529777608715114521399 s + 81667865388403953518175)$ \\ \hline 
$c_8$ & $3(2s+3)^{2}(144006747456152398025000 s^{30} + 3873703685787439628342400 s^{29} + 50086950606023925868479036 s^{28} + 414347293343103311413357200 s^{27} + 2462455716417895553466814404 s^{26} + 11190311839066344259536714024 s^{25} + 40409295972773177050104566556 s^{24} + 118947134811306669490272616200 s^{23} + 290460448001449937986609641984 s^{22} + 595650982876961700444247916136 s^{21} + 1034394503332837044832871973660 s^{20} + 1529122522398341452237748264328 s^{19} + 1929170112144408531295247240352 s^{18} + 2077275729835779898627972452564 s^{17} + 1904400092584849147398803257857 s^{16} + 1479295019246765244421617960408 s^{15} + 967232587581523134128767690737 s^{14} + 529827149456294513579866857762 s^{13} + 245850149371151220247602350838 s^{12} + 103586181981415962365079350136 s^{11} + 47441969831005210765856651490 s^{10} + 27017370328613731242988726116 s^{9} + 16543604074621342316534833896 s^{8} + 8959808104815503968120901160 s^{7} + 3934148480503417587877177653 s^{6} + 1356708501926603220744080694 s^{5} + 359356010050812330787473279 s^{4} + 70800398025365428901919252 s^{3} + 9805705639441860010993074 s^{2} + 854294427989006197205052 s + 35306186075611540243056)$ \\ \hline 
$c_7$ & $12(2s+3)^{2}(22154884224023445850000 s^{31} + 616966740327228674348400 s^{30} + 8270907073696162683430488 s^{29} + 71054888374958447419331400 s^{28} + 439319157110546803811896968 s^{27} + 2081153035704566658130624800 s^{26} + 7851699408458680802242634484 s^{25} + 24207516760625423072207073414 s^{24} + 62092517428805875866794674881 s^{23} + 134191349159021239688414857431 s^{22} + 246518378732311775029464265197 s^{21} + 387229048800653531738114739999 s^{20} + 521849077561894657636398449655 s^{19} + 604010499659026951460897607741 s^{18} + 599706193952136964321127694249 s^{17} + 508932600092700362308014694866 s^{16} + 366923845830106735087199088303 s^{15} + 222970678878271885535159028828 s^{14} + 113510751404816691589761355815 s^{13} + 48959621580159733845289884633 s^{12} + 19343819733170955544319552163 s^{11} + 8570967648243419298660377211 s^{10} + 4842281705213527108416513279 s^{9} + 2909138374562317990360031394 s^{8} + 1522921303596173996138575905 s^{7} + 642212965528640572310436879 s^{6} + 212347408792609836644815359 s^{5} + 53940504869138165983355556 s^{4} + 10200981421548866272272021 s^{3} + 1357609423748226427157778 s^{2} + 113782587792457476788865 s + 4528476210896135134182)$ \\ \hline 
$c_6$ & $3(2s+3)^{2} (44309768448046891700000 s^{32} + 1275958134912779427134400 s^{31} + 17712124648743520374246000 s^{30} + 157800783303192342484807776 s^{29} + 1013476002096822653934247536 s^{28} + 4996346811471994438240970400 s^{27} + 19656891049003759997738219664 s^{26} + 63343018585986823082619729288 s^{25} + 170258537043632779229473723584 s^{24} + 386719255970188993917224574336 s^{23} + 749195973037122202127202135768 s^{22} + 1245952789865163842159925500256 s^{21} + 1785990969360715046000117363568 s^{20} + 2210911897085152430737822315488 s^{19} + 2363325656949833631869930474232 s^{18} + 2176386534475868543250178452024 s^{17} + 1718519269597730769046219931925 s^{16} + 1154898997656814617038309424924 s^{15} + 653993799682370716585040976045 s^{14} + 309097812387589842994049088882 s^{13} + 122649986514805927975811013678 s^{12} + 44139321593386486437541304232 s^{11} + 18194600512073131179652474218 s^{10} + 10080729455247177149466360108 s^{9} + 6012154495009832863143860703 s^{8} + 3086713901904485302077423912 s^{7} + 1264536513002644891553573532 s^{6} + 404269004313535997709064188 s^{5} + 99085318554137395893467265 s^{4} + 18066869592965845573731768 s^{3} + 2318037559219684454533893 s^{2} + 187335877784959577298978 s + 7192069815364796066229)$ \\ \hline 
$c_5$ & $6(2s+3)^{2}(8861953689609378340000 s^{33} + 263596557834220301114400 s^{32} + 3784460184258343211722032 s^{31} + 34920466929143934879808368 s^{30} + 232642130166877120425752976 s^{29} + 1191697800945367086458055072 s^{28} + 4880694751880441559218476632 s^{27} + 16406950824469547680560778764 s^{26} + 46112865280418851489322074134 s^{25} + 109812274944960351155991786018 s^{24} + 223725856433516713998849805632 s^{23} + 392660140917098227883941019880 s^{22} + 596450332899276085006754000436 s^{21} + 786241559139425306167747437324 s^{20} + 900162472445706270827205735408 s^{19} + 894118456198813762374744874842 s^{18} + 768019765700356821541048064211 s^{17} + 567188919515029404795124639815 s^{16} + 356892849173690588784279865923 s^{15} + 188887726440058090918123569807 s^{14} + 82758519677573937797801658678 s^{13} + 29755841968295155180938377160 s^{12} + 9265655123985315388229427762 s^{11} + 3269804705124270960946179492 s^{10} + 1745378750437674672254525343 s^{9} + 1067124171571276165079329161 s^{8} + 553192327447935426146669544 s^{7} + 224471881547232310097558892 s^{6} + 70271287113829333092849234 s^{5} + 16759856517295120894936656 s^{4} + 2963427571961631174417201 s^{3} + 367988093962382467875321 s^{2} + 28750493973686584294998 s + 1066338997876680421860)$ \\ \hline 
$c_4$ & $3(2s+3)^{2}(5538721056005861462500 s^{34} + 170000930428677948001200 s^{33} + 2521542870629614052494182 s^{32} + 24069033861073900393194288 s^{31} + 166112140239471730224127950 s^{30} + 882861984743248195220227068 s^{29} + 3758135559481861075860190560 s^{28} + 13155787658258741235890755800 s^{27} + 38587134973448786366330248032 s^{26} + 96129109345583765886638321232 s^{25} + 205447147746380335664768754954 s^{24} + 379449453375926778871028692368 s^{23} + 608776780980435381145177931646 s^{22} + 851251839731897497067600541072 s^{21} + 1039128387112340409627618422211 s^{20} + 1107338907683241287910817574616 s^{19} + 1028236425910911889110211542327 s^{18} + 828653388719550997841334599814 s^{17} + 575679860099214460991336767122 s^{16} + 341113980250874385715195498752 s^{15} + 169602112160966873244672554562 s^{14} + 69019121880524178108395152092 s^{13} + 22192275965509681657763172138 s^{12} + 5513039324469607430622101184 s^{11} + 1308552733582056616685717799 s^{10} + 622822426968272935610750562 s^{9} + 441208218615564366021478680 s^{8} + 251086626744997918806221832 s^{7} + 105542726343197962447668330 s^{6} + 33182712041751038192684904 s^{5} + 7819078036046387444720541 s^{4} + 1353372939412637489000388 s^{3} + 163593372233172281873202 s^{2} + 12396727299661082512092 s + 444813790293506728368)$ \\ \hline 
$c_3$ & $6(2s+3)^{2}(651614241883042525000 s^{35} + 20618119083643318565400 s^{34} + 315621334999692786151116 s^{33} + 3113065178572047666007860 s^{32} + 22229797545183952680520284 s^{31} + 122422561739941481599835484 s^{30} + 540840514528742111761776912 s^{29} + 1968384017248063183323104976 s^{28} + 6014334916396782599305697730 s^{27} + 15642733509059107994947830402 s^{26} + 34991063427920339849664070098 s^{25} + 67834512242456744868412053510 s^{24} + 114610735641635823232680558264 s^{23} + 169420289219412443682245006436 s^{22} + 219630129457905943676457475806 s^{21} + 249914888458917575863280485878 s^{20} + 249458032509437740525240777731 s^{19} + 217920380868106493481515756421 s^{18} + 165868299839291545410161133771 s^{17} + 109196804131621756456248426789 s^{16} + 61455743971044438451107683232 s^{15} + 29010820857450160933812140136 s^{14} + 11111724769820435809112639700 s^{13} + 3230978841571830756780993930 s^{12} + 597734071316630030096071452 s^{11} + 19471456710454363005152664 s^{10} - 14413823109350224541499726 s^{9} + 9684588377731643071927236 s^{8} + 12560219583039185504039910 s^{7} + 6629813221106696409536742 s^{6} + 2267528918316638233964400 s^{5} + 549510851612447197946100 s^{4} + 95169593716835317546698 s^{3} + 11333379076369208961606 s^{2} + 837761545501050427146 s + 29118359247617829474)$ \\ \hline 
$c_2$ & $(2s+3)^{2}(651614241883042525000 s^{36} + 21236128705089231483600 s^{35} + 335176828913504517258492 s^{34} + 3412480272349060314590760 s^{33} + 25184187621847674304218612 s^{32} + 143532477912633204107111040 s^{31} + 657199533413487737746005840 s^{30} + 2483057802762141981372604872 s^{29} + 7890444256414221204984097182 s^{28} + 21386706626264438997499257504 s^{27} + 49968626609865032054943532722 s^{26} + 101443944207534124952038283748 s^{25} + 180022708498134425424774257358 s^{24} + 280473316548653330703764939232 s^{23} + 384770188717561290211106882724 s^{22} + 465568061409025346140795055268 s^{21} + 497070528817890647576496874221 s^{20} + 467853455550722338016560291068 s^{19} + 387255091639245361342752551670 s^{18} + 280658492377779215502417761676 s^{17} + 176840580743027422215976620477 s^{16} + 95791062786555508724088742320 s^{15} + 43785862330162499052209529768 s^{14} + 16326736853527122991715155824 s^{13} + 4608219050084326790748933153 s^{12} + 763090880285610579007863108 s^{11} - 70273595236432368329707716 s^{10} - 99854072704322871537392604 s^{9} - 35726031377969792088188925 s^{8} - 6099037895307670142287608 s^{7} + 385997000711223832356168 s^{6} + 571809550541807937530952 s^{5} + 184789343789534461150530 s^{4} + 35488375569622472169240 s^{3} + 4362950858813170449228 s^{2} + 320602140994390122456 s + 10806813741383936712)$ \\ \hline 
$c_1$ & $2s^{2}(2s+3)^{2}(5s+6)^{2}(1371819456595879000 s^{33} + 42716345570559668088 s^{32} + 643544475178313701908 s^{31} + 6247566555702580389060 s^{30} + 43916765565269249016660 s^{29} + 238129437052493452170540 s^{28} + 1036074226442125183419168 s^{27} + 3714930106381256786671824 s^{26} + 11187728687072575053763704 s^{25} + 28696708254175557235886484 s^{24} + 63352891406570842442956878 s^{23} + 121330161626918848657558398 s^{22} + 202765255131175811905486752 s^{21} + 296952156159684842999476188 s^{20} + 382193906007924533060066196 s^{19} + 432977106373155008061033636 s^{18} + 431882942555537334945182376 s^{17} + 378922778321739951076319160 s^{16} + 291693785851369028780662644 s^{15} + 196147224158011892839433394 s^{14} + 114410349624644626423212729 s^{13} + 57248884332430976396451627 s^{12} + 24130617950503756984051008 s^{11} + 8287189086050003227856022 s^{10} + 2152391916458370195915867 s^{9} + 325184614597411140314601 s^{8} - 33007972351878903475404 s^{7} - 41792510938686638897304 s^{6} - 15831185972996449730358 s^{5} - 3877006197061115690130 s^{4} - 665506024092175855680 s^{3} - 78419814392209911120 s^{2} - 5759186746951521828 s - 200126180395998828)$ \\ \hline 
$c_0$ & $s^{4}(2s+3)^{2}(5s+6)^{4}(5487277826383516 s^{30} + 162900207047936448 s^{29} + 2337377142714373098 s^{28} + 21588165729897598296 s^{27} + 144210704373637237422 s^{26} + 742206852421449807276 s^{25} + 3061285798160471289822 s^{24} + 10391895636644586020112 s^{23} + 29588363727735612069036 s^{22} + 71651404108146138897096 s^{21} + 149117620073027461547436 s^{20} + 268806705178958727187248 s^{19} + 422185992215286625454736 s^{18} + 580191752386498986323712 s^{17} + 699694724310231624064272 s^{16} + 741757261801656111225984 s^{15} + 691666247103583662351612 s^{14} + 567033087779716710023352 s^{13} + 408047131644656580559296 s^{12} + 257036644794565634383008 s^{11} + 141145264141826804073576 s^{10} + 67178460532288884909516 s^{9} + 27499663057942951141041 s^{8} + 9582491278489242855672 s^{7} + 2803365139419823150782 s^{6} + 675682953313836552876 s^{5} + 130659498671205647052 s^{4} + 19489563056909654496 s^{3} + 2105305456045150908 s^{2} + 146594486051390088 s + 4941387170271576)$ \\ \hline 
$\Delta_2$ & $\begin{array}{l@{\;}l@{\;}l}
&-36s^{4}(2s+3)^{4}(5s+6)^{4} (188188862149501274759871978264100000 s^{66} \\
&+ 11719822793673389354852048917721870400 s^{65} \\
&+ 359034059515843764915172404946627408992 s^{64} \\
&+ 7212088571266452516322419090023965301952 s^{63} \\
&+ 106839132025368784093448214456114080089896 s^{62} \\
&+ 1244653930745343462400612341123366430556112 s^{61} \\
&+ 11874531272586759157048871425273776419646480 s^{60} \\
&+ 95397174639780849117400217925290860797543072 s^{59} \\
&+ 658601421196162394675006092409270655783387096 s^{58} \\
&+ 3968022355733696091765263130542489256493297296 s^{57} \\
&+ 21117182157874536980169655465156091744276920380 s^{56} \\
&+ 100234695532226306436649215523120163350432040784 s^{55} \\
&+ 427723545408455739277440670239201960498527445168 s^{54} \\
&+ 1651693204898583110001983831675383703838803111280 s^{53} \\
&+ 5803881305128515928064472759889503906134840719788 s^{52} \\
&+ 18645262399111732404927179918528703191829668776944 s^{51} \\
&+ 54982160923970862907155853987157692828051002129468 s^{50} \\
&+ 149340327131211909506021191619303279476069307001480 s^{49} \\
&+ 374738908504925756556865067810806785110272348870998 s^{48} \\
&+ 870959171363388767125455183938474548684433032733976 s^{47} \\
&+ 1879125423105677523532695113591118653251030543219650 s^{46} \\
&+ 3770893425502604495219029177077658520798571516698604 s^{45} \\
&+ 7050015221784174490794108388251813073259643663091008 s^{44} \\
&+ 12297516978145400887377430125405972391309535236404816 s^{43} \\
&+ 20038164484164236659289158464333223322979861456369558 s^{42} \\
&+ 30532204930658301922179960307897986850960973110401060 s^{41} \\
&+ 43539740165642307747449376716152274738997971952381288 s^{40} \\
&+ 58147942722340474688455284463585466926829447124690864 s^{39} \\
&+ 72765609819670245981642226229421333542816165300786478 s^{38} \\
&+ 85352226271944117403522054472274990615987412077658020 s^{37} \\
&+ 93861185123580180986528832241395054124125210332787978 s^{36} \\
&+ 96773676228707454794185949836280770022200702637789136 s^{35} \\
&+ 93535505340566829835280969592461902479874869737070324 s^{34} \\
&+ 84726573070331222224010481384087646174794999517275720 s^{33} \\
&+ 71892923025396467242385567135204296459367606931171072 s^{32} \\
&+ 57107505385821670765861094421169358296399126222638720 s^{31} \\
&+ 42429475579369200294980396815322839839865021711481712 s^{30} \\
&+ 29453402248671311370918975586371950400100875859184424 s^{29} \\
&+ 19076887418815475591391225344828673331985433949859745 s^{28} \\
&+ 11509630902613742430056710113025979946368929363591560 s^{27} \\\
&+ 6455314570589262935425947995355398219841083619460275 s^{26} \\
&+ 3357432561061020603580382785922585900536396393910862 s^{25} \\
&+ 1614483581569691108434974127018929362203153022365980 s^{24} \\
&+ 715184392123946764432274154396140356129136774278152 s^{23} \\
&+ 290561593707882411154971270303899134550457836920080 s^{22} \\
&+ 107686614725038641896140906093508540897806554334884 s^{21} \\
&+ 36175123767827876571287168571772096066641842189715 s^{20} \\
&+ 10936295413621417840147244658584579127207174389352 s^{19} \\
&+ 2956231446951945885452453177600650222360031507853 s^{18} \\
&+ 714328315495087334767452511829810830341704532294 s^{17} \\
&+ 157862256397341201315796262528952028945210160178 s^{16}
\end{array}$ \\
& $\begin{array}{l@{\;}l@{\;}l}
&+ 34669184815408932243484290688498684622916000552 s^{15} \\
&+ 8792208859688249070338942374434094956929968434 s^{14} \\
&+ 2728958913497904679605629304348283067662149156 s^{13} \\
&+ 908514477310679518991239534084776498569549130 s^{12} \\
&+ 281997384651571824431416925444015818101696600 s^{11} \\
&+ 76595712561329944441138942454188872738496992 s^{10} \\
&+ 17823235368805916505433544016195415996681584 s^{9} \\
&+ 3522507407940359387775769801382214828420516 s^{8} \\
&+ 586782005064608000832043474107919395519552 s^{7} \\
&+ 81484523152763229188477555846803640661672 s^{6} \\
&+ 9275689993243887053899463828498929963344 s^{5} \\
&+ 843808167321923057379954129881612634096 s^{4} \\
&+ 58988087589412129623633466476137297856 s^{3} \\
&+ 2972581287433071568118115850779633072 s^{2} \\
&+ 95923391203691628771401672158154016 s \\
&+ 1483351410366365393372190806569392)
\end{array}$ \\ \hline 
\end{xltabular}    

The last lemma presents some properties for the population-level SGPO and GRPO dynamics. 
\begin{lemma}\label{lem:dynamics}
Under the assumptions from \cref{thm:main}, the following statements hold true,  
\begin{enumerate}[label=(\roman*), nosep, leftmargin=*, wide, ref=\thelemma(\roman*)]
\item \label{lem:dynamics-1} $p_{\texttt{SGPO}}^{(k)},q_{\texttt{SGPO}}^{(k)},p_{\texttt{GRPO}}^{(k)},q_{\texttt{GRPO}}^{(k)}\in(0,1)$ for all $k \geq 0$. 
\item \label{lem:dynamics-2} $p_{\texttt{SGPO}}^{(k)},q_{\texttt{SGPO}}^{(k)},p_{\texttt{GRPO}}^{(k)},q_{\texttt{GRPO}}^{(k)}$ are strictly increasing in $k$ and lie in $(\frac{1}{2},1)$ for all $k\geq 1$. 
\item \label{lem:dynamics-3} $p_{\texttt{SGPO}}^{(k)}>q_{\texttt{SGPO}}^{(k)}$ for all $k\geq 1$. 
\end{enumerate}
\end{lemma}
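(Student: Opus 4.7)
The plan is to prove (i)--(iii) by induction on $k$, exploiting the closed-form updates from \cref{eq:update_minimal} and the monotonicity properties established in \cref{lem:monotone}.

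For parts (i) and (ii), I would first observe that each of the four sequences follows an update of the common form $z \mapsto \frac{ze^c}{(1-z) + ze^c}$, where the nonnegative exponent $c$ is read off from \cref{eq:log-update}: it equals $p(1-p)$ for $p_{\texttt{SPO}}$, $p^2 q(1-q)$ for $q_{\texttt{SPO}}$, $p(1-p)q$ for $p_{\texttt{GRPO}}$, and $pq(1-q)$ for $q_{\texttt{GRPO}}$. When $(p^{(k)}, q^{(k)}) \in (0,1)^2$, each exponent is strictly positive, and a direct calculation shows $\frac{ze^c}{(1-z)+ze^c} \in (z, 1)$. This simultaneously yields the invariance $(p^{(k)}, q^{(k)}) \in (0,1)^2$ asserted in (i) and the strict monotonicity asserted in (ii); the lower bound $> \tfrac{1}{2}$ for $k\geq 1$ then follows from the initialization $\tfrac{1}{2}$.

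Part (iii) is where the main work lies, since $p_{\texttt{SPO}}$ and $q_{\texttt{SPO}}$ obey different update maps. My strategy is a dominating-map argument: the $p$-update is autonomous, $p_{\texttt{SPO}}^{(k+1)} = \exp(f_{11}(p_{\texttt{SPO}}^{(k)}))$, and I will bound $q_{\texttt{SPO}}^{(k+1)}$ above by the hypothetical $\exp(f_{11}(q_{\texttt{SPO}}^{(k)}))$ (i.e., what $q$ would be if it followed the same autonomous dynamic), then invoke monotonicity of $f_{11}$ to close the loop. Concretely, using the decompositions $f_{12}(p, q) = \log q + h_q(p^2 q(1-q))$ and $f_{11}(q) = \log q + h_q(q(1-q))$ together with the bound $p^{(k)} < 1$ from (i) and the strict monotonicity of $h_q$ (\cref{lem:monotone-2}), I obtain $f_{12}(p^{(k)}, q^{(k)}) < f_{11}(q^{(k)})$, hence $q_{\texttt{SPO}}^{(k+1)} < \exp(f_{11}(q_{\texttt{SPO}}^{(k)}))$. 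Applying the strict monotonicity of $f_{11}$ (\cref{lem:monotone-1}) under the inductive hypothesis $q_{\texttt{SPO}}^{(k)} \leq p_{\texttt{SPO}}^{(k)}$ gives $\exp(f_{11}(q^{(k)})) \leq \exp(f_{11}(p^{(k)})) = p_{\texttt{SPO}}^{(k+1)}$, closing the induction from the base $q_{\texttt{SPO}}^{(0)} = p_{\texttt{SPO}}^{(0)} = \tfrac{1}{2}$.

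The main obstacle I anticipate is in (iii): a naive attempt to track the log-odds gap $\log\frac{p}{1-p} - \log\frac{q}{1-q}$ fails because its per-step change $p(1-p) - p^2 q(1-q)$ can turn negative once $p$ is close to $1$, so the gap is not monotone in $k$. The dominating-map reformulation above sidesteps this by never exposing the raw gap, and instead compares each $q$-iterate directly to the autonomous $p$-dynamics, for which strict monotonicity is immediate from \cref{lem:monotone-1}.
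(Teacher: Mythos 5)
Your proposal is correct and follows essentially the same route as the paper: parts (i)--(ii) via the invariance $z \mapsto ze^{c}/((1-z)+ze^{c}) \in (z,1)$ for $c>0$, and part (iii) via induction using the decomposition through $h_q$ with \cref{lem:monotone-2} (to compare $q_{\texttt{SPO}}^{(k+1)}$ against $\exp(f_{11}(q_{\texttt{SPO}}^{(k)}))$) followed by \cref{lem:monotone-1}. The only cosmetic difference is the order in which the two monotonicity lemmas are invoked in (iii); the inequalities chained are identical.
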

\begin{proof}
We first rewrite the update rule in \cref{eq:update_minimal} as follows,  
\begin{equation*}
\begin{array}{lll}
& p_{\texttt{SGPO}}^{(k+1)} = p_{\texttt{SGPO}}^{(k)}\tfrac{e^{\Delta_{\texttt{SGPO},p}^{(k)}}}{1-p_{\texttt{SGPO}}^{(k)}+p_{\texttt{SGPO}}^{(k)}e^{\Delta_{\texttt{SGPO},p}^{(k)}}}, & \text{where} \quad \Delta_{\texttt{SGPO},p}^{(k)} = p_{\texttt{SGPO}}^{(k)}(1-p_{\texttt{SGPO}}^{(k)}), \\
& q_{\texttt{SGPO}}^{(k+1)} = q_{\texttt{SGPO}}^{(k)}\tfrac{e^{\Delta_{\texttt{SGPO},q}^{(k)}}}{1-q_{\texttt{SGPO}}^{(k)}+q_{\texttt{SGPO}}^{(k)}e^{\Delta_{\texttt{SGPO},q}^{(k)}}}, & \text{where} \quad \Delta_{\texttt{SGPO},q}^{(k)} = (p_{\texttt{SGPO}}^{(k)})^2q_{\texttt{SGPO}}^{(k)}(1-q_{\texttt{SGPO}}^{(k)}), \\
& p_{\texttt{GRPO}}^{(k+1)} = p_{\texttt{GRPO}}^{(k)}\tfrac{e^{\Delta_{\texttt{GRPO},p}^{(k)}}}{1-p_{\texttt{GRPO}}^{(k)}+p_{\texttt{GRPO}}^{(k)}e^{\Delta_{\texttt{GRPO},p}^{(k)}}}, & \text{where} \quad \Delta_{\texttt{GRPO},p}^{(k)} = p_{\texttt{GRPO}}^{(k)}(1-p_{\texttt{GRPO}}^{(k)})q_{\texttt{GRPO}}^{(k)}, \\
& q_{\texttt{GRPO}}^{(k+1)} = q_{\texttt{GRPO}}^{(k)}\tfrac{e^{\Delta_{\texttt{GRPO},q}^{(k)}}}{1-q_{\texttt{GRPO}}^{(k)}+q_{\texttt{GRPO}}^{(k)}e^{\Delta_{\texttt{GRPO},q}^{(k)}}}, & \text{where} \quad \Delta_{\texttt{GRPO},q}^{(k)} = p_{\texttt{GRPO}}^{(k)}q_{\texttt{GRPO}}^{(k)}(1-q_{\texttt{GRPO}}^{(k)}). 
\end{array}
\end{equation*}
First of all, the uniform initialization yields the desired result for $k=0$. Suppose $p_{\texttt{SGPO}}^{(k)} \in (0,1)$ for some $k\geq 0$. Then, we have
\begin{equation*}
1 - p_{\texttt{SGPO}}^{(k)} + p_{\texttt{SGPO}}^{(k)}e^{\Delta_{\texttt{SGPO},p}^{(k)}} > p_{\texttt{SGPO}}^{(k)}e^{\Delta_{\texttt{SGPO},p}^{(k)}} > 0,
\end{equation*}
which implies $p^{(k+1)}_{\texttt{SGPO}} \in (0,1)$. By induction, we have $p_{\texttt{SGPO}}^{(k)}\in(0,1)$ for all $k \geq 0$. Similarly, we can show that $q_{\texttt{SGPO}}^{(k)},p_{\texttt{GRPO}}^{(k)},q_{\texttt{GRPO}}^{(k)} \in (0,1)$ for all $k\geq 0$. 

Furthermore, we have $\Delta_{\texttt{SGPO},p}^{(k)}>0$ since $p_{\texttt{SGPO}}^{(k)}\in(0,1)$. This implies
\begin{equation*}
\tfrac{p_{\texttt{SGPO}}^{(k+1)}}{p_{\texttt{SGPO}}^{(k)}} = \tfrac{1}{(1-p_{\texttt{SGPO}}^{(k)})e^{-\Delta_{\texttt{SGPO},p}^{(k)}}+p_{\texttt{SGPO}}^{(k)}}  > \tfrac{1}{1-p_{\texttt{SGPO}}^{(k)}+p_{\texttt{SGPO}}^{(k)}} = 1. 
\end{equation*}
Since $p_{\texttt{SGPO}}^{(0)}=\frac{1}{2}$, we have $p_{\texttt{SGPO}}^{(k)} \in (\frac{1}{2},1)$ for all $k\geq 1$. Similarly, we can show that $q_{\texttt{SGPO}}^{(k)},p_{\texttt{GRPO}}^{(k)},q_{\texttt{GRPO}}^{(k)}$ are strictly increasing and lie in $(\frac{1}{2},1)$. 

Finally, we have $p_{\texttt{SGPO}}^{(0)} \geq q_{\texttt{SGPO}}^{(0)}$. Thus, it suffices to show that $p_{\texttt{SGPO}}^{(k)} \geq q_{\texttt{SGPO}}^{(k)}$ implies $p_{\texttt{SGPO}}^{(k+1)} > q_{\texttt{SGPO}}^{(k+1)}$ for all $k \geq 0$. Indeed, \cref{lem:monotone-1} and $p_{\texttt{SGPO}}^{(k)} \geq q_{\texttt{SGPO}}^{(k)}$ yield 
\begin{equation*}
p_{\texttt{SGPO}}^{(k+1)} = \exp(f_{11}(p_{\texttt{SGPO}}^{(k)})) \geq \exp(f_{11}(q_{\texttt{SGPO}}^{(k)})) = \exp (\log(q_{\texttt{SGPO}}^{(k)})+ h_{q_{\texttt{SGPO}}^{(k)}}(q_{\texttt{SGPO}}^{(k)}(1-q_{\texttt{SGPO}}^{(k)}))). 
\end{equation*}
Then, \cref{lem:monotone-2} and $p_{\texttt{SGPO}}^{(k)}\in(0,1)$ yield
\begin{equation*}
\exp (\log(q_{\texttt{SGPO}}^{(k)})+ h_{q_{\texttt{SGPO}}^{(k)}}(q_{\texttt{SGPO}}^{(k)}(1-q_{\texttt{SGPO}}^{(k)}))) > \exp(\log q_{\texttt{SGPO}}^{(k)}+h_{q_{\texttt{SGPO}}^{(k)}}((p_{\texttt{SGPO}}^{(k)})^2q_{\texttt{SGPO}}^{(k)}(1-q_{\texttt{SGPO}}^{(k)}))). 
\end{equation*}
In addition, we have
\begin{equation*}
q_{\texttt{SGPO}}^{(k+1)} = \exp(f_{12}(p_{\texttt{SGPO}}^{(k)}, q_{\texttt{SGPO}}^{(k)})) = \exp(\log q_{\texttt{SGPO}}^{(k)}+h_{q_{\texttt{SGPO}}^{(k)}}((p_{\texttt{SGPO}}^{(k)})^2q_{\texttt{SGPO}}^{(k)}(1-q_{\texttt{SGPO}}^{(k)}))). 
\end{equation*}
Putting these pieces together yields $p_{\texttt{SGPO}}^{(k+1)} > q_{\texttt{SGPO}}^{(k+1)}$. 
\end{proof}

\end{document}